\definecolor{smcolor}{rgb}{0.5490196078431373, 0.33725490196078434, 0.29411764705882354}
\definecolor{cpcolor}{rgb}{0.8392156862745098, 0.15294117647058825, 0.1568627450980392}
\definecolor{dsmcolor}{rgb}{0.17254901960784313, 0.6274509803921569, 0.17254901960784313}
\definecolor{ssmcolor}{rgb}{1.0, 0.4980392156862745, 0.054901960784313725}
\definecolor{ssmvrcolor}{rgb}{0.12156862745098039, 0.4666666666666667, 0.7058823529411765}
\definecolor{bpcolor}{rgb}{0.5803921568627451, 0.403921568627451, 0.7411764705882353}
\definecolor{mlecolor}{rgb}{0.8901960784313725, 0.4666666666666667, 0.7607843137254902}
\definecolor{elbocolor}{rgb}{0.4980392156862745, 0.4980392156862745, 0.4980392156862745}
\definecolor{steincolor}{rgb}{0.7372549019607844, 0.7411764705882353, 0.13333333333333333}
\definecolor{spectralcolor}{rgb}{0.09019607843137255, 0.7450980392156863, 0.8117647058823529}
\newcommand{\mbf}[1]{\mathbf{#1}}
\newcommand{\mbb}[1]{\mathbb{#1}}
\newcommand{\ud}{\mathrm{d}}
\newcommand{\up}{\mathrm}
\newcommand{\mcal}{\mathcal}
\newcommand{\pdata}{p_{\mathrm{data}}}
\newcommand{\norm}[1]{\left\lVert#1\right\rVert}
\newtheorem{lemma}{Lemma}
\newtheorem{theorem}{Theorem}
\newtheorem{corollary}{Corollary}
\newenvironment{customthm}[1]
{\innercustomthm}
{\endinnercustomthm}
\newcommand{\be}{\begin{equation}}
	\newcommand{\ee}{\end{equation}}
\definecolor{Gray}{gray}{0.85}
\definecolor{LightCyan}{rgb}{0.88,1,1}
\newcolumntype{a}{>{\columncolor{Gray}}c}
\newcolumntype{b}{>{\columncolor{white}}c}
\DeclareMathOperator*{\arginf}{arg\,inf}
\def\@onedot{\ifx\@let@token.\else.\null\fi\xspace}
\DeclareRobustCommand\onedot{\futurelet\@let@token\@onedot}
\newcommand{\figref}[1]{Fig\onedot~\ref{#1}}
\newcommand{\secref}[1]{Section~\ref{#1}}
\newcommand{\tabref}[1]{Tab\onedot~\ref{#1}}
\newcommand{\eqnref}[1]{Eq\onedot~\eqref{#1}}
\newcommand{\appref}[1]{Appendix~\ref{#1}}
\newcommand{\corref}[1]{Corollary~\ref{#1}}
\newcommand{\lemref}[1]{Lemma~\ref{#1}}
\newcommand{\bfx}{\mathbf{x}}
\newcommand{\bfX}{\mathbf{X}}
\newcommand{\bfv}{\mathbf{v}}
\newcommand{\bfz}{\mathbf{z}}
\newcommand{\bfT}{\mathbf{T}}
\newcommand{\bftheta}{{\boldsymbol{\theta}}}
\newcommand{\bfalpha}{{\boldsymbol{\alpha}}}
\def\eg{\emph{e.g}\onedot}
\def\ie{\emph{i.e}\onedot}
\def\iid{i.i.d\onedot}
\newcommand{\KLD}[2]{D_{\mathrm{KL}}\left(\left. #1 \ \middle\Vert \ #2 \right.\right)}
\begin{document}

\twocolumn[

\aistatstitle{Gaussianization Flows}

\aistatsauthor{ Chenlin Meng* \And Yang Song* \And Jiaming Song \And  Stefano Ermon }

\aistatsaddress{Computer Science Department, Stanford University}]

\begin{abstract}
Iterative Gaussianization is a fixed-point iteration procedure that can transform any continuous random vector into a Gaussian one. Based on iterative Gaussianization, we propose a new type of normalizing flow model that enables both efficient computation of likelihoods and efficient inversion for sample generation. We demonstrate that these models, named \textit{Gaussianization flows}, are universal approximators for continuous probability distributions under some regularity conditions. Because of this guaranteed expressivity, they can capture multimodal target distributions without compromising the efficiency of sample generation. Experimentally, we show that Gaussianization flows achieve better or comparable performance on several tabular datasets compared to other efficiently invertible flow models such as Real NVP, Glow and FFJORD. In particular, Gaussianization flows are easier to initialize, demonstrate better robustness with respect to different transformations of the training data, and generalize better on small training sets.
\end{abstract}
\section{INTRODUCTION}
Maximum likelihood is a widely adopted approach for 
density estimation. However, for very expressive probabilistic models, \eg, those parameterized by deep neural networks, evaluating likelihood can be intractable. Several special architectures have been proposed to build probabilistic models with tractable likelihoods. One such family of models is \emph{normalizing flows}~\citep{rezende15variational,dinh2014nice,dinh2016density}. These models learn a bijective mapping $\bfT$ that pushes forward the data distribution to a simple target distribution (typically Gaussian or uniform) such that the log determinant of the transformation's Jacobian ($\log |\det J_\bfT|$) is efficient to compute. The corresponding likelihood can then be efficiently computed via the change of variables formula, enabling efficient training via maximum likelihood.

Given a density model, it is often desirable to generate samples from it in an efficient way. This requires an additional property for normalizing flow models: the inverse of $\bfT$ must also be easy to compute. Unfortunately, even though flow models are invertible by construction, they are not always efficiently invertible in practice. For example, models like MAF~\citep{maf}, NAF~\citep{huang2018neural}, Block-NAF~\citep{de2019block} all need $D$ times more computation for inversion than for likelihood evaluation, where $D$ is the data dimension. Continuous flow models, such as Neural ODE~\citep{chen2018neural} and FFJORD~\citep{FFJORD}, take roughly the same time for inversion and likelihood evaluation, but both directions involve slow numerical integration procedures. Models based on coupling layers, \eg, Real NVP~\citep{nvp} and Glow~\citep{glow}, have efficient procedures for both inversion and likelihood computation, yet it is unclear whether their architectures are sufficiently expressive to capture all distributions.

To explore different flow architectures that are expressive and permit efficient sampling, we draw inspiration from iterative Gaussianization. First proposed in \citet{chen2001gaussianization}, it is an iterative approach to transform the data distribution to a standard (multivariate) Gaussian distribution. Specifically, we first transform each data point with a linear mapping (typically an orthogonal matrix computed by ICA or PCA), and then individually ``Gaussianize'' the marginal distributions of each data dimension. This is achieved by estimating each univariate CDF, mapping each data dimension to a uniform random variable, and then transforming it to a Gaussian by CDF inversion. 
Intuitively, the linear mapping in Gaussianization amounts to finding a specific direction where the marginals of the data distribution are as ``non-Gaussian'' as possible; this ``non-Gaussianity'' is reduced by the subsequent Gaussianization step performed for each marginal distribution. As proved in \citet{chen2001gaussianization}, the transformed data distribution converges to a  standard normal if this procedure is repeated a sufficiently large number of times (under some conditions). Though theoretically satisfying, this method has many limitations in practice. First, Gaussianizing marginal distributions is practically difficult, even in the univariate case, because   non-parametric methods for CDF estimation (such as kernel density estimation) can be inaccurate and hard to tune. Second, finding optimal linear mappings such that the marginal distributions are ``non-Gaussian'' is challenging and traditional methods such as linear ICA do not have closed-form solutions and can be very slow to run for large scale datasets.

To mitigate these limitations while preserving theoretical guarantees, we propose to parameterize the Gaussianization procedure to make it jointly trainable, in lieu of following the original iterative refining approach. This results in a new family of flow models named \emph{Gaussianization flows}. More specifically, we parameterize the linear mapping by stacking several Householder transformations with learnable parameters. After this linear mapping, we parameterize an element-wise non-linear transformation by composing the inverse Gaussian CDF with the CDF of a trainable mixture of logistic distributions. Combining the linear mapping and element-wise non-linear transformation, we get a differentiable \emph{Gaussianization module} whose Jacobian determinant is available in closed-form, and inversion is easy to compute. We can stack several Gaussianization modules to form a Gaussianization flow model which is also easy to invert.

We can show that Gaussianization flows are universal approximators when the model is sufficiently wide and deep, meaning that the model architecture is theoretically expressive enough to transform any data distribution with strictly positive density to a Gaussian distribution (under some regularity conditions). Due to the connection between Gaussianization flows and iterative Gaussianization, the layers of Gaussianization flows have a natural interpretation. For example, the mixture of logistics in a Gaussianization flow should ideally capture the marginal distribution obtained after applying the Householder layer. We can therefore initialize the parameters of the mixture of logistic used for Gaussianization using a kernel density estimator with logistic kernels for better training.
Because of the non-parametric nature of kernel density estimation, this intialization is more adaptive, providing some robustness with respect to re-parameterizations of the data. 

In our experiments, we demonstrate that Gaussianization flows achieve better or comparable performance on density estimation for tabular data, compared to some efficient invertible baselines such as Real NVP, Glow and FFJORD. In particular, we achieve better performance when the number of training data points is limited, and our models show more robustness to reparameterizations of the data.

\section{BACKGROUND}
\subsection{Density Estimation with Flow Models}
Let $\mcal{D} = \{\bfx_j \in \mbb{R}^D\}_{j=1}^{N}$ %
be a dataset of continuous observations which are \iid samples from an unknown continuous data distribution (denoted as $\pdata$). Given this dataset $\mcal{D}$, the goal of density estimation is to approximate $\pdata$ with a probabilistic model parameterized by $\bftheta$ (denoted as $p_\bftheta$). Specifically, we learn an invertible model $\bfT_\bftheta: \mbb{R}^D \to \mbb{R}^D$, which performs a bijective, differentiable transformation of $\bfx$ to $\bfz = \bfT_\bftheta(\bfx)$. Using the change of variables formula, %
\begin{align*}
    p_\bftheta(\bfx) = p_z(\bfT_\bftheta(\bfx)) \left\vert \det \frac{\partial \bfT_\bftheta(\bfx)}{\partial \bfx} \right\vert = p_z(\bfz) 
    |\det J_{\bfT_\bftheta}(\bfx)|\label{eqn:cofv},
\end{align*}
where $\det J_{\bfT_\bftheta}(\bfx)$ denotes the determinant of the Jacobian matrix evaluated at $\bfx$, and $p_z(\bfz)$ is a simple fixed distribution with tractable density (\eg the multivariate standard Gaussian $\mcal{N}(\mathbf{0}, \mathbf{I})$). Note that in order to evaluate the likelihood $p_\bftheta(\bfx)$, the determinant of Jacobian $\det J_{\bfT_\bftheta}(\bfx)$ must be easy to compute. Models with this property are named \emph{normalizing flow models}~\citep{rezende15variational}.

Multiple flow models $\bfT_1, \bfT_2, \cdots, \bfT_L$ can be stacked together to yield a deeper and more expressive model $\bfT = \bfT_1 \circ \bfT_2 \circ \cdots \circ \bfT_L$. Since $\bfT^{-1} = \bfT^{-1}_L \circ \bfT^{-1}_{L-1} \circ \cdots \circ \bfT^{-1}_1$, and $\det J_\bfT = \det J_{\bfT_1} \det J_{\bfT_2} \cdots \det J_{\bfT_L}$, as long as each component $\bfT_i$ is invertible and has tractable determinant of Jacobian, the combined model $\bfT$ also shares such properties.

\subsection{Iterative Gaussianization}
\label{sec:2.2}
Training a flow model with maximum likelihood amounts to solving
\begin{multline}
    \min_\bftheta \mbb{E}_{p_\text{data}(\bfx)}[-\log p_\bftheta(\bfx)] \\
    = \min_\bftheta \KLD{p_\text{data}(\bfx)}{p_\bftheta(\bfx)} + \up{const}.
\end{multline}
When $p_\bftheta(\bfx)$ is the likelihood of a flow model $\bfT_\bftheta(\bfx)$ given by \eqnref{eqn:cofv}, we can transform the above objective using the fact that KL divergence is invariant to bijective mappings of random variables, which gives us
\begin{multline}
    \min_\bftheta \KLD{p_\text{data}(\bfx)}{p_\bftheta(\bfx)} + \up{const}\\
    = \min_\bftheta \KLD{p_{\bfT_\bftheta}(\bfz)}{\mcal{N}(\mbf{0}, \mbf{I})} + \up{const}\label{eqn:gauss_obj},
\end{multline}
where $p_{\bfT_\bftheta}$ denotes the distribution of $\bfz = \bfT_\bftheta(\bfx)$, when $\bfx$ is sampled from $p_\bftheta(\bfx)$. Intuitively, \eqnref{eqn:gauss_obj} means that training a flow model with maximum likelihood is equivalent to finding an invertible transformation to warp the data distribution to a multivariate standard normal distribution. This task is well-known as \emph{Gaussianization}~\citep{chen2001gaussianization}.%

For one-dimensional (univariate) data $x \sim \pdata(x)$, one could perform Gaussianzation by estimating its cumulative density function (CDF, \eg using kernel density estimation) and applying the inverse Gaussian CDF. To see this, let $\Phi$ be the CDF of the standard normal distribution, and $F_\text{data}$ be the CDF of the data distribution, we can transform any random variable $x \sim \pdata$ to a Gaussian random variable $z$ by $z = \Phi^{-1}\circ F_\text{data}(x)$.

For high dimensional data, one key observation is that the KL divergence between a distribution $p(\bfx)$ and a multivariate standard Gaussian distribution can be decomposed as follows~\citep{chen2001gaussianization}:
\begin{equation}
    \KLD{p(\bfx)}{\mcal{N}(\mbf{0}, \mbf{I})}\triangleq J(\bfx) = I(\bfx)+J_m(\bfx) \label{eq:kl_gauss}
\end{equation}
where $I(\bfx)$ is the multi-information that measures the statistical dependence among components of $\bfx$:
\begin{equation}
I(\bfx) = \KLD{p(\bfx)}{\prod_i^{D} p_i(x^{(i)})}, \label{eq:ix}
\end{equation}
and $J_m(\bfx)$ is the sum of KL divergences between the marginal distributions and univariate standard normal distributions:
\begin{equation}
J_m(\bfx)=\sum_{i=1}^{D}\KLD{p_i(x^{(i)})}{\mcal{N}(0,1)}. \label{eq:jm}
\end{equation}
Here we represent $\bfx=(x^{(1)},x^{(2)},\cdots,x^{(D)})^\intercal$, and let $p_i(x^{(i)})$ be the marginal distribution of $p(\bfx)$.
Intuitively, to transform the data distribution into a multivariate unit Gaussian, we need to make each dimension independent ($I(\bfx) = 0$), and each marginal distribution univariate standard normal ($J_m(\bfx) = 0$).

Based on the decomposition \eqnref{eq:kl_gauss}, a particular iterative Gaussianization~\citep{chen2001gaussianization} approach---Rotation-Based Iterative Gaussianization (RBIG,~\citet{laparra2011iterative})---alternates between applying one-dimensional Gaussianization and rotations to the data. Specifically, RBIG estimates the marginal distribution corresponding to each dimension of the data distribution, and performs one-dimensional Gaussianization of all marginal distributions. Then, RBIG applies a rotation matrix to the transformed data.

The rationale behind RBIG is that dimension-wise Gaussianization will decrease $J_m(\bfx)$ and leave $I(\bfx)$ invariant, due to the fact $I(\bfx)$ is invariant under dimension-wise invertible transformations~\citep{laparra2011iterative}, whereas applying rotation to $p(\bfx)$ will not modify the overall KL divergence objective $I(\bfx) + J_m(\bfx)$ since KL is invariant under bijective transformations (rotation in particular) and $\mcal{N}(\mbf{0},\mbf{I})$ is rotationally invariant. Therefore, $\KLD{p(\bfx)}{\mcal{N}(\mbf{0},\mbf{I})}$ will not increase (typically decreases) at each RBIG iteration. %
To improve the performance of RBIG, one could consider rotation operators that make $J_m(\bfx)$ as large as possible, so that the subsequent marginal Gaussianization step removes $J_m(\bfx)$ and results in a large decrease in $\KLD{p(\bfx)}{\mcal{N}(\mbf{0},\mbf{I})}$. Popular choices of rotation matrices include random matrices and those computed by independent component analysis (ICA) and principal component analysis (PCA). However, all three candidates are less than desirable. For random rotations and PCA, 
the procedure 
could require many RBIG steps to converge~\citep{laparra2011iterative}. ICA, on the other hand, is optimal yet does not have closed-form solutions and is expensive to compute in practice.

\section{METHOD}
While iterative Gaussianization possesses the ability to transform a complex distribution to standard normal, density estimation with iterative Gaussianization is still difficult, because of the following challenges:
\begin{itemize}
    \item One-dimensional (1D) Gaussianization is challenging for certain data distributions;
    \item Finding optimal rotation matrices is challenging (as in the case of ICA rotation matrices, which have no closed form solution).
\end{itemize}
In this section, we address these challenges with a new type of invertible flow model based on the iterative Gaussianization (RBIG) method, named \textit{Gaussianization Flows} (GF). Specifically, GF improves the two components of RBIG where we replace 1D Gaussianization with a \textit{trainable kernel layer} and a fixed rotation matrix with a \textit{trainable orthogonal matrix layer}.

\subsection{Building Trainable Kernel Layers}
\label{sec:trainable_kernels}

Marginal Gaussianization plays a crucial role in RBIG since it reduces the objective value $J_m(\bfx)$ in \eqnref{eq:jm} and is the only procedure that decreases the KL objective in \eqnref{eq:kl_gauss} (rotation does not change the KL divergence because KL is invariant to bijective mappings, however, it enables progress in the next iteration). For a set of 1D scalars $\{x_j\}_{j=1}^{M}$, one could perform Gaussianization by first estimating a CDF (denoted as $F_{\text{data}}(x)$), and then applying the transformation $\phi: x \mapsto \Phi^{-1} \circ F_{\text{data}}(x)$ where $\Phi$ is the CDF for a 1D standard Gaussian. %

One approach to estimate the CDF is via 1D density estimation, where the CDF can be computed from the PDF by taking the integral. As we are assuming the underlying data distribution is continuous, we can naturally employ kernel density estimation (KDE) methods to fit the data PDF, and then obtain the CDF by integrating out the kernels in closed-form. However, there are two shortcomings of KDE for large-scale density estimation. Firstly, the complexity of computing the KDE for each sample scales quadratically with the number of samples, making it prohibitive for larger batches/datasets; secondly, the performance of KDE largely depends on the sample size~\citep{parzen1962estimation,devroye1979} and bandwidth selection \citep{sheather2004density}, yet optimal bandwidths are difficult to obtain even with good bandwidth selection heuristics \citep{scott1985kernel}.

To alleviate the limitations of existing non-parametric KDE approaches, we propose to learn a ``parameterized KDE'' for each data dimension, leading to \textit{trainable kernel layers}. For each data dimension (indexed by $d=1,2,\cdots,D$), we learn a set of anchor points $\{\mu_j^{(d)}\}_{j=1}^{K}$ and bandwidth parameters $\{h^{(d)}_j\}_{j=1}^K$.
This leads to a total of $2KD$ parameters for a trainable kernel layer. Mathematically, we parameterize a CDF with the following
\begin{align} 
  F_{\bftheta}^{(d)}(x) \triangleq \frac{1}{K} \sum_{j=1}^{K} \sigma\left(\frac{x^{(d)} - \mu_j^{(d)}}{h_j^{(d)}}\right), d=1,\cdots,D,
\end{align}
where $\sigma(\cdot)$ denotes the sigmoid function throughout the paper, and $\bftheta$ denotes the collection of all trainable parameters ($\{\mu_j^{(d)}\}_{j=1}^{K}$ and $\{h^{(d)}_j\}_{j=1}^K$). Learning this CDF amounts to performing KDE with a logistic kernel when $\sigma(\cdot)$ is the sigmoid function. Then, the Gaussianization procedure for dimension $d$ can be parameterized as 
\begin{align}
    \Psi^{(d)}_\bftheta(x) \triangleq \Phi^{-1} \circ F_{\bftheta}^{(d)}(x), \quad d=1,\cdots,D,
\end{align}
and we denote $\Psi_\bftheta = (\Psi_\bftheta^{(1)}, \Psi_\bftheta^{(2)}, \cdots, \Psi_\bftheta^{(D)})^\intercal$.

By making anchor points and bandwidths trainable, our parametric trainable kernel layer can be more sample efficient compared to the traditional non-parametric KDE approach (when trained, for example, with maximum likelihood). We find that 20 to 100 anchor points work well in practice. In stark contrast, na\"{i}ve KDE needs thousands of sample points to get comparable results, which is particularly inefficient given that the computational complexity scales quadratically with respect to $K$. 

We note that $\Psi$ is a transformation with a Jacobian whose determinant is tractable. Additionally, $\Psi$ can be efficiently inverted:
\begin{itemize}
    \item $\Phi, \Phi', \Phi^{-1}$ are not computable by elementary functions, yet they can be efficiently evaluated via numerical methods.
    \item As both $\Phi^{-1}$ and $F_{\bftheta}^{(d)}$ are monotonic, $\Psi_\bftheta^{(d)} = \Phi^{-1} \circ F_{\bftheta}^{(d)}$ is also monotonic. We can therefore efficiently invert $\Psi_\bftheta$ by inverting all of its dimensions with the \emph{bisection method in parallel}, as $\Psi_\bftheta$ is element-wise. %
    \item The Jacobian of $\Psi$ is a diagonal matrix. The log-determinant is therefore the sum of the log-derivatives of $\Phi^{-1} \circ F_{\bftheta}^{(d)}(x)$ over all dimensions.
\end{itemize}

\subsection{Building Trainable Rotation Matrix Layers}
\label{sec:rotation_matrices}
In iterative Gaussianization, we transform the data using a rotation matrix after the marginal Gaussianization step. As mentioned in \secref{sec:2.2}, finding a good rotation matrix is challenging using methods like ICA or PCA. Here, we discuss our approach to finding rotations by optimizing trainable rotation matrices.
\subsubsection{Householder Reflections}
We can parameterize the rotation matrix using Householder reflections, defined for any vector $\bfv \in \mathbb{R}^D$: %
\begin{align}
    H = I - \frac{2 \bfv \bfv^\intercal}{\norm{\bfv}_2^2}.
\end{align}
Any $D\times D$ orthogonal matrix $R$ can be represented as the product of at most $D$ Householder reflections~\citep{tomczak2016improving}, \ie, $R = H_1 H_2 \cdots H_D$. 

By parameterizing the rotation matrix with multiple trainable Householder reflections, we define a \textit{trainable orthogonal matrix layer}. Since the inverse of a rotation matrix is the transpose of itself, one can efficiently obtain the inverse by multiplying the transpose of the orthogonal matrix. Moreover, because the Jacobian determinant of an orthogonal transformation is always one, we can easily compute the Jacobian determinant of this layer, which is also equal to one.

One caveat is that each Householder reflection requires $D$ parameters, and thus fully parameterizing a rotation matrix will require $O(D^2)$ parameters. This is reasonable when the data dimension is small. However, this may no longer be feasible in cases where $D$ is large. %
For example, CIFAR-10~\citep{krizhevsky2009learning} images have $D = 3072$, and ImageNet~\citep{imagenet_cvpr09} images can have $D$ as large as $10^6$. %
In such cases, one may need to trade off model flexibility for computational efficiency by using a smaller number ($<D$) of Householder reflections. Below, we explore one such approach that exploits the structure of images and utilizes a patch-based parameterization of rotation matrices to significantly reduce the number of parameters.

\subsubsection{Patch-Based Rotation Matrices}

Intuitively, a pixel in an image is more correlated to its neighboring pixels than far away ones. Based on this intuition, we propose ``patch-based'' Householder reflections for parameterizing rotation matrices for images. Recalling that the role of the rotation matrix in RBIG is to render the components as independent as possible, patch-based Householder reflections are designed to focus on the components where we expect to get the biggest gains, i.e., the ones that are farthest from being independent.

For an image with dimension $L\times L$, the rotation matrix will have size $L^2\times L^2$. Assuming $p$ is a divisor of $L$ and $L=p\times k$, we can partition the matrix into $k^2\times k^2$ smaller blocks each with size $p^2\times p^2$. Instead of directly parameterizing the $L^2\times L^2$ rotation matrix using $L^2$ Householder reflections, we parameterize a block-diagonal rotation matrix with $k^2$ blocks. Each block on the diagonal is a $p^2\times p^2$ rotation matrix, which requires $p^2$ Householder reflections to parameterize. Since rotation is now only performed in each $p\times p$-dimensional subspace, we leverage a ``shift'' operation on the input vectors to introduce dependency across different rotational subspaces. %
We call this block-diagonal rotation matrix a ``patch-based rotation matrix'' (see \figref{fig:patch_rotation}), and relegate extra details to Appendix~\ref{app:patch}.

\begin{figure}%
    \centering
        \includegraphics[width=0.5\textwidth]{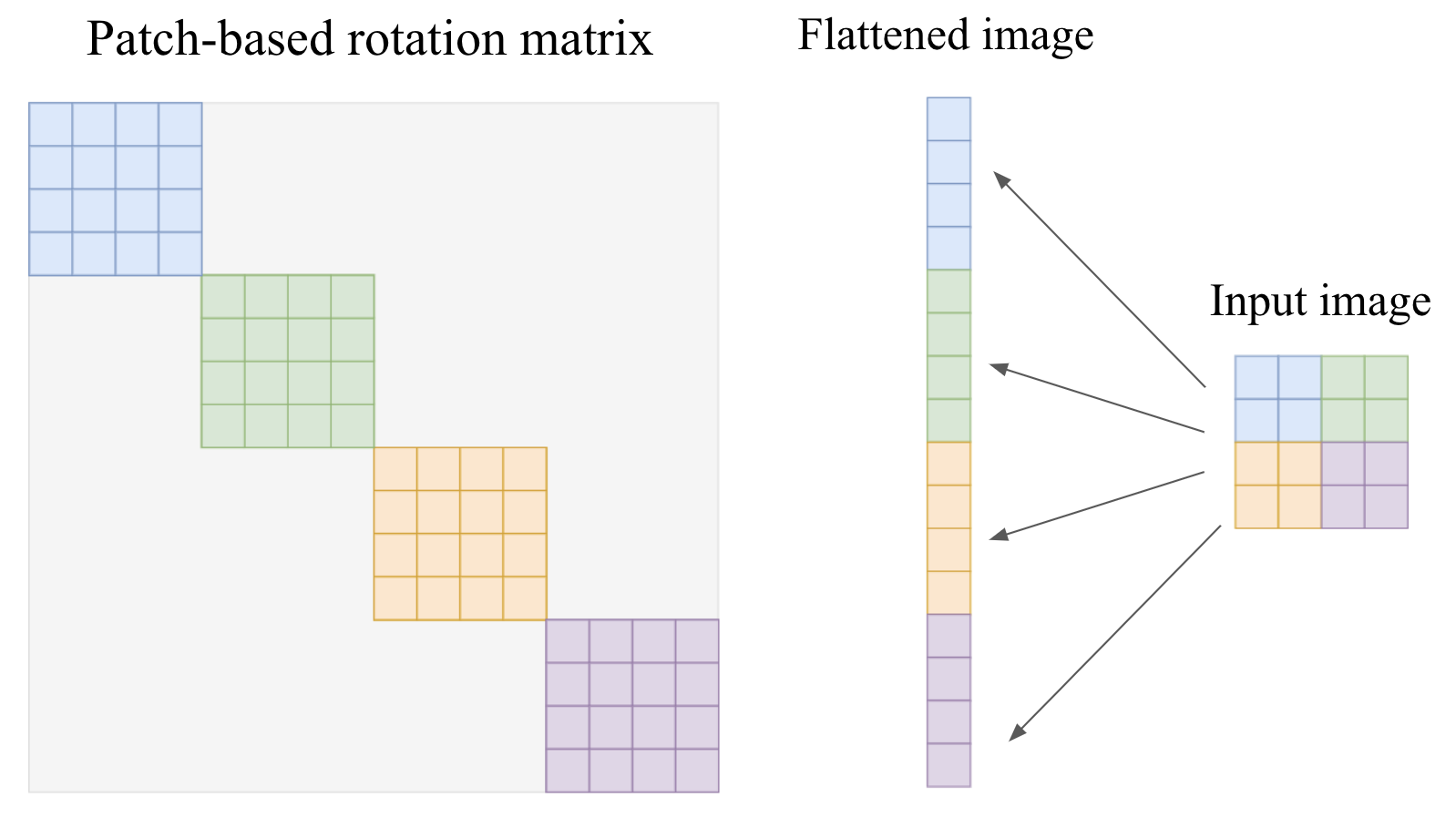}
    \caption{A patch-based rotation matrix where $L=4, p=2$ and $k=2$. %
    All entries with the grey color are zeros. Each $4\times 4$ block on the diagonal corresponds to a new subspace of neighboring pixels, where we perform Householder reflections.}
    \label{fig:patch_rotation}
\end{figure}

\subsection{Deep Gaussianization Flows}

Our proposed model, \textit{Gaussianization flow}, is constructed by stacking trainable kernel layers (Section~\ref{sec:trainable_kernels}) and orthogonal matrix layers (Section~\ref{sec:rotation_matrices}) alternatively. Formally, we define an Gaussianization flow with $L$ trainable kernel layers and orthogonal layers as:
\begin{align}
T_\bftheta(\bfx) = \Psi_{\bftheta_L}\circ R_L \circ \Psi_{\bftheta_{L-1}} \circ \cdots \circ \Psi_{\bftheta_1}\circ R_1 \bfx
\end{align}
where $\bftheta$ denotes the collection of all parameters. 

Note that both forward and backward computations of the Gaussianization flow are efficient, and the log determinant of its Jacobian can be computed in closed-form. Consequently, we can \emph{train} Gaussianization flows jointly with maximum likelihood, as well as producing samples efficiently. This is to the contrary of RBIG, which is a non-trainable iterative procedure.

\subsection{Gaussianization Flows are Universal Approximators}
We hereby prove that Gaussianization flows can transform any continuous distribution with a compact support to a standard normal, given that the number of layers and the number of parameters in each layer are sufficiently large. Ours is the first universal approximation result we are aware of for efficiently invertible normalizing flows.

Our results closely follow that of \cite{chen2001gaussianization}. However, we note that their results are weaker than what we need: they assume the marginal Gaussianization step can be done perfectly, whereas we use the learnable kernel layers for doing marginal Gaussianization. We defer all proofs to Appendix~\ref{app:proof}.

Our proof starts by showing that mixtures of logistic distributions (as used in our learnable kernel layers) are universal approximators for continuous densities (see \lemref{lem:universal} in Appendix). Therefore, our learnable kernel layers will be able to do arbitrarily good marginal Gaussianization when sufficiently many anchor points are used. Based on this, we show that Gaussianization flow is a universal approximator given a sufficient number of layers:
\begin{theorem}\label{thm:1}
Let $p$ be any continuous distribution supported on a compact set $\mcal{X} \subset \mbb{R}^D$, and $\inf_{x\in\mcal{X}} p(x) \geq \delta$ for some constant $\delta > 0$. Then, there exists a sequence of marginal Gaussianization layers $\{\Psi_{\bftheta_1}, \Psi_{\bftheta_2},\cdots, \Psi_{\bftheta_k}, \cdots\}$ and rotation matrices $\{R_1, R_2, \cdots, R_k, \cdots\}$ such that the transformed random variable
\begin{gather*}
    \Psi_{\bftheta_k}\circ R_k \circ \Psi_{\bftheta_{k-1}} \circ R_{k-1} \circ \cdots \circ \Psi_{\bftheta_1}\circ R_1\bfX
    \stackrel{d}{\to} \mcal{N}(\mbf{0}, \mbf{I}),
\end{gather*}
where $\bfX \sim p$.
\end{theorem}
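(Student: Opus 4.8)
The plan is to follow the convergence argument of \citet{chen2001gaussianization} for rotation-based iterative Gaussianization, but to replace the idealized ``perfect'' one-dimensional Gaussianization step by our learnable kernel layers, controlling the resulting approximation error with \lemref{lem:universal}. Write $\bfX_0=\bfX$ and $\bfX_k=\Psi_{\bftheta_k}\circ R_k\,\bfX_{k-1}$, let $q_k$ be the law of $\bfX_k$, and set $J_k\triangleq\KLD{q_k}{\mcal N(\mbf0,\mbf I)}=I(\bfX_k)+J_m(\bfX_k)$ as in \eqnref{eq:kl_gauss}. First, $J_0<\infty$: $p$ is continuous on the compact set $\mcal X$ hence bounded, and $\|x\|^2$ is bounded on $\mcal X$, so all terms of $J_0=\int p\log p+\tfrac12\mbb E\|\bfX\|^2+\tfrac D2\log 2\pi$ are finite. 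Second, each $\Psi_{\bftheta_k}$ acts coordinate-wise through the smooth strictly increasing bijections $\Phi^{-1}\circ F^{(d)}_{\bftheta_k}\colon\mbb R\to\mbb R$ and each $R_k$ is invertible and linear, so every $q_k$ is a continuous density on $\mbb R^D$ and the marginals of every $R_k\bfX_{k-1}$ are continuous one-dimensional densities, which is the setting in which \lemref{lem:universal} applies.

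\emph{Steps 1--2 (defect control, choice of rotations, tightness).} Fix a summable sequence $\xi_k\le 2^{-k}$. At step $k$, the distribution $R_k\bfX_{k-1}$ has continuous marginals, so by \lemref{lem:universal} we may choose $\bftheta_k$ so that each logistic-mixture CDF $F^{(d)}_{\bftheta_k}$ is close enough to the true marginal CDF that applying $\Phi^{-1}\circ F^{(d)}_{\bftheta_k}$ (which would push that marginal \emph{exactly} to $\mcal N(0,1)$ were $F^{(d)}_{\bftheta_k}$ exact) leaves residual $\KLD{\cdot}{\mcal N(0,1)}\le\xi_k/D$ per coordinate, hence $J_m(\bfX_k)\le\xi_k$. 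Since $\Psi_{\bftheta_k}$ is coordinate-wise, $I(\bfX_k)=I(R_k\bfX_{k-1})$, and since $\KLD{\cdot}{\mcal N(\mbf0,\mbf I)}$ is rotation-invariant, $I(R_k\bfX_{k-1})+J_m(R_k\bfX_{k-1})=J_{k-1}$; therefore
\[
  J_k = I(R_k\bfX_{k-1})+J_m(\bfX_k) = J_{k-1}-J_m(R_k\bfX_{k-1})+\xi_k .
\]
Choose $R_k\in O(D)$ with $J_m(R_k\bfX_{k-1})\ge\sup_{R\in O(D)}J_m(R\bfX_{k-1})-2^{-k}$ (a near-maximizer; note $\sup_R J_m(R\bfX_{k-1})\le J_{k-1}<\infty$). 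Summing the displayed identity and using $J_k\ge0$ gives $\sum_{i\ge1}J_m(R_i\bfX_{i-1})\le J_0+\sum_i\xi_i<\infty$, so $J_m(R_k\bfX_{k-1})\to0$ and hence $s_k\triangleq\sup_R J_m(R\bfX_{k-1})\to0$; the same bound also yields $J_k\le J_0+\sum_i\xi_i=:C<\infty$ for all $k$. A uniformly bounded KL to $\mcal N(\mbf0,\mbf I)$ forces a uniform second-moment bound on $q_k$ (e.g.\ by the Donsker--Varadhan inequality applied to $x\mapsto\lambda\|x\|^2$), so $\{q_k\}$ is tight.

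\emph{Step 3 (identifying the limit).} Let $q_\infty$ be any weak limit point, $q_{k_n}\to q_\infty$. For fixed $R\in O(D)$, $R$ is weakly continuous and $\KLD{\cdot}{\mcal N(0,1)}$ is weakly lower semicontinuous, so $J_m$ is weakly lower semicontinuous and $J_m(R_\#q_\infty)\le\liminf_n J_m(R_\#q_{k_n})\le\liminf_n s_{k_n}=0$. Thus every marginal of $R_\#q_\infty$ is exactly $\mcal N(0,1)$, for every $R$; letting the first row of $R$ range over all unit vectors, every one-dimensional projection of $\bfZ\sim q_\infty$ is $\mcal N(0,1)$, so by the Cram\'er--Wold device the characteristic function of $q_\infty$ is $e^{-\|s\|^2/2}$, i.e.\ $q_\infty=\mcal N(\mbf0,\mbf I)$. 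Since $\{q_k\}$ is tight and all its weak limit points equal $\mcal N(\mbf0,\mbf I)$, we conclude $q_k\to\mcal N(\mbf0,\mbf I)$ weakly, i.e.\ $\bfX_k\stackrel{d}{\to}\mcal N(\mbf0,\mbf I)$.

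\emph{Main obstacle.} Relative to \citet{chen2001gaussianization}, the genuinely new work is in Step~1: upgrading the qualitative statement of \lemref{lem:universal} to a \emph{quantitative} bound on the residual relative entropy $\KLD{(\Phi^{-1}\circ F^{(d)}_{\bftheta_k})_\#(\text{marginal})}{\mcal N(0,1)}$. This needs the universal-approximation guarantee in a strong enough topology (relative entropy or $\chi^2$, not merely weak convergence) and careful treatment of tails: after the first layer the marginals are supported on all of $\mbb R$, so the logistic mixture must match both the bulk and the tails of the true marginal well enough that composing with the unbounded map $\Phi^{-1}$ does not blow up the divergence; here the lower bound $\inf_{\mcal X}p\ge\delta$ is what keeps intermediate densities from degenerating and makes such relative-entropy approximation feasible. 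A secondary point requiring care is the weak-topology bookkeeping of Steps~2--3, in particular that bounded KL to the Gaussian suffices for tightness and that the lower semicontinuity of $J_m$ and the near-maximality of $R_k$ combine as claimed.
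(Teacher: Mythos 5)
Your overall architecture matches the paper's: both follow the Chen--Gopinath/Huber projection-pursuit argument, decomposing $\KLD{q_k}{\mcal N(\mbf 0,\mbf I)}=I+J_m$, choosing $R_k$ to (nearly) minimize the multi-information, making the marginal-Gaussianization residual summable, deducing that $\sup_{\norm{\bfalpha}_2=1}\KLD{\bfalpha^\intercal\bfX_k}{\mcal N(0,1)}\to 0$, and concluding weak convergence. Where the paper cites Lemma 6 of \citet{chen2001gaussianization} as a black box for the final step, you unpack it (tightness via Donsker--Varadhan, weak lower semicontinuity of $J_m$, Cram\'er--Wold); that is a correct, self-contained replacement, and your use of near-maximizing rotations rather than an $\arginf$ is if anything cleaner than the paper's.

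There are, however, two genuine gaps. First, the step you defer as the ``main obstacle'' --- upgrading uniform approximation of the marginal density to a bound on $\KLD{\bigl(\Phi^{-1}\circ F^{(d)}_{\bftheta_k}\bigr)_{\#}(\text{marginal})}{\mcal N(0,1)}$ --- is precisely what the paper supplies in \lemref{lem:universal_kl} and \corref{cor:kl}: if the marginal is continuous, compactly supported, and bounded below by $\delta>0$, a uniform $\epsilon_1$-approximation $f_N$ gives $\sup_x|\log f_N(x)-\log p(x)|\le\epsilon_1/(\delta-\epsilon_1)$, hence a KL bound, and the pushforward identity $\KLD{\Phi^{-1}(F_N(X))}{\mcal N(0,1)}=\KLD{p}{f_N}$ transfers it to the transformed coordinate. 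You correctly identified that the lower bound $\delta$ is the enabling hypothesis, but a proof must actually carry this out. Second, your assertion that ``after the first layer the marginals are supported on all of $\mbb R$'' is false, and if it were true it would invalidate your own repeated invocation of \lemref{lem:universal}, which requires compact support. In fact $F^{(d)}_{\bftheta}$ maps any compact interval into a compact subinterval of $(0,1)$, and $\Phi^{-1}$ is continuous there, so each $\Psi_{\bftheta_k}\circ R_k$ maps compact sets to compact sets and every intermediate law remains compactly supported; one must also argue that each intermediate marginal stays bounded below on its support (the paper does this via positivity of the Jacobian determinants of $\Psi^{(k)}$ and $|\det R^{(k)}|=1$). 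This propagation of hypotheses is exactly what licenses applying the approximation lemma at every layer $k\ge 2$, and the tail-control difficulty you anticipate does not arise.
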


\subsection{Building Invertible Networks with Proper Initializations}
Since our Gaussianization flow is a trainble extension of RBIG, we propose to provide good initializations for Gaussianization flows using RBIG. In the \textit{trainable rotation matrix layers}, we randomly initialize each Householder reflection vector with samples from an isotropic Gaussian. This amounts to using random rotation matrices in RBIG. We abstain from using ICA/PCA layers for providing initialization both for the aforementioned computational issues, and for the fact that they provide similar results in practice.

In the \textit{trainable kernel layer}, we consider a data-dependent initialization approach, %
using $N$ random samples from the dataset. To initialize KDE anchor points in the first layer, we randomly draw $N$ samples from the dataset. %
More generally, we initialize the KDE anchor points at layer $l+1$ using the outputs of the $l$-th trainable rotation matrix layer.

In fact, the initial state of our model corresponds to an iterative Gaussianization method, which, as shown previously, is capable of capturing distributions to a certain level. This allows our GF to outperform other normalizing flows at initial iterations. Because of the good initialization, our model also exhibit better robustness with respect to re-parameterizations of the data.

\section{EXPERIMENTS}\label{sec:exp}

We evaluate our Gaussianization Flow (GF) on several datasets; these include synthetic 2D toy datasets, benchmark tabular UCI datasets~\citep{maf} (Power, Gas, Hepmass, MiniBoone, BSDS300) and two image datasets (MNIST and Fashion-MNIST). We compare with several popular invertible models for density estimation, including RealNVP~\citep{dinh2016density}, Glow~\citep{glow}, FFJORD~\citep{FFJORD}, MAF~\citep{maf}, TAN~\citep{oliva2018transformation} and NAF~\citep{huang2018neural}; we also compare directly with RBIG~\citep{laparra2011iterative} for reference. %
\begin{figure}[H]%
    \centering
    \begin{subfigure}[b]{0.1\textwidth}
        \includegraphics[width=\textwidth]{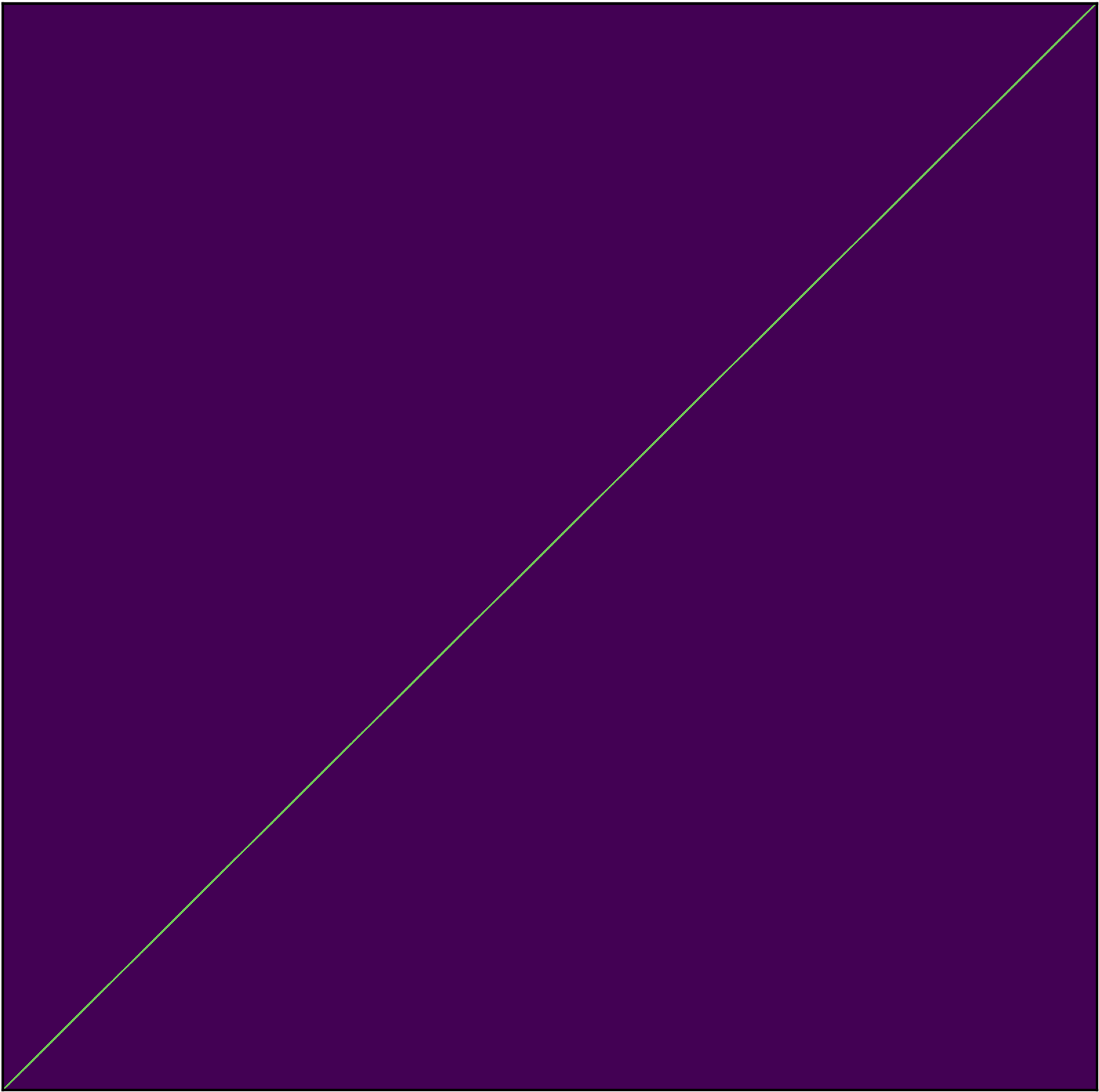}
    \end{subfigure}
    ~
    \centering
    \begin{subfigure}[b]{0.1\textwidth}
        \includegraphics[width=\textwidth]{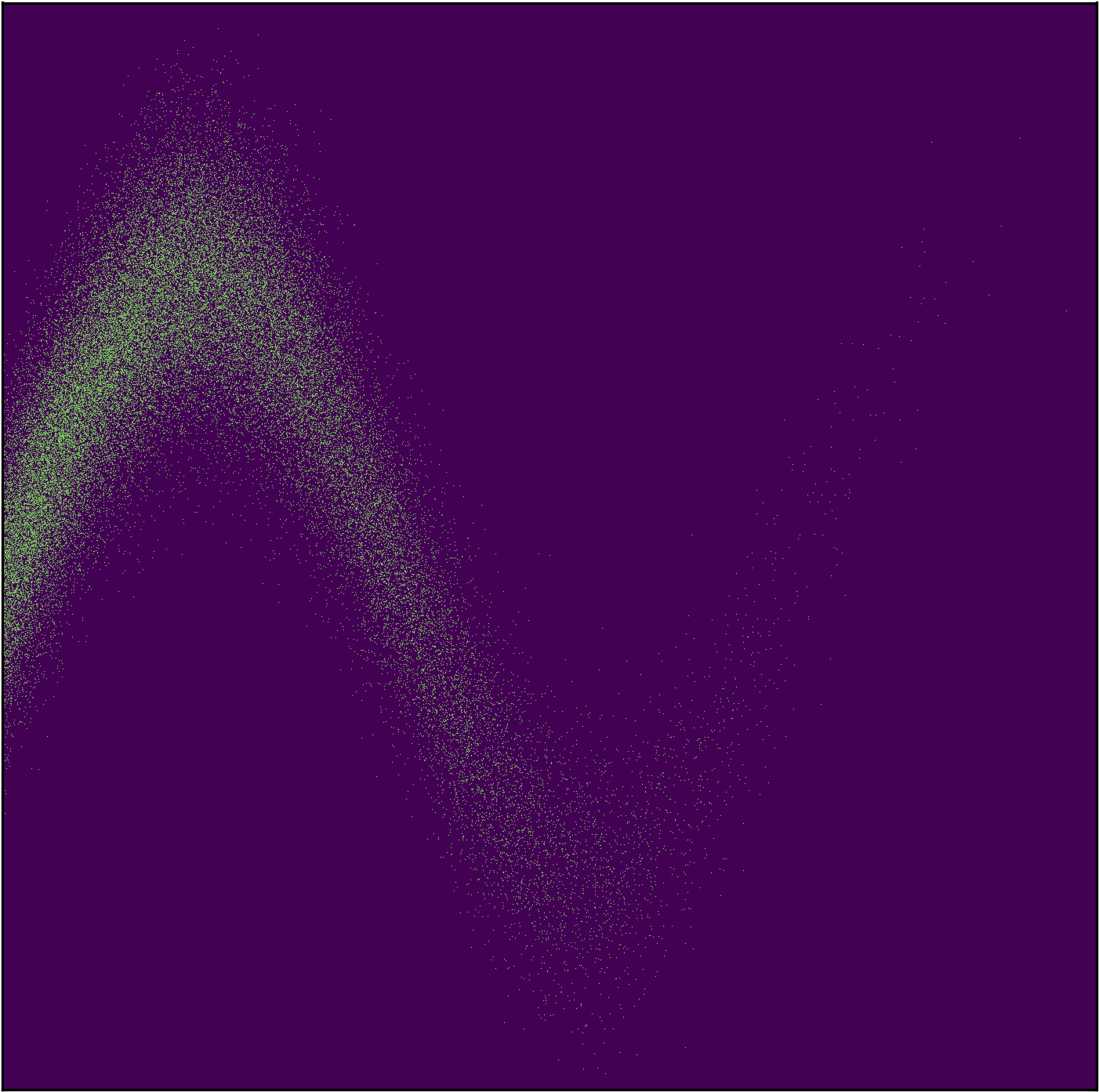}
    \end{subfigure}
    ~
    \centering
    \begin{subfigure}[b]{0.1\textwidth}
        \includegraphics[width=\textwidth]{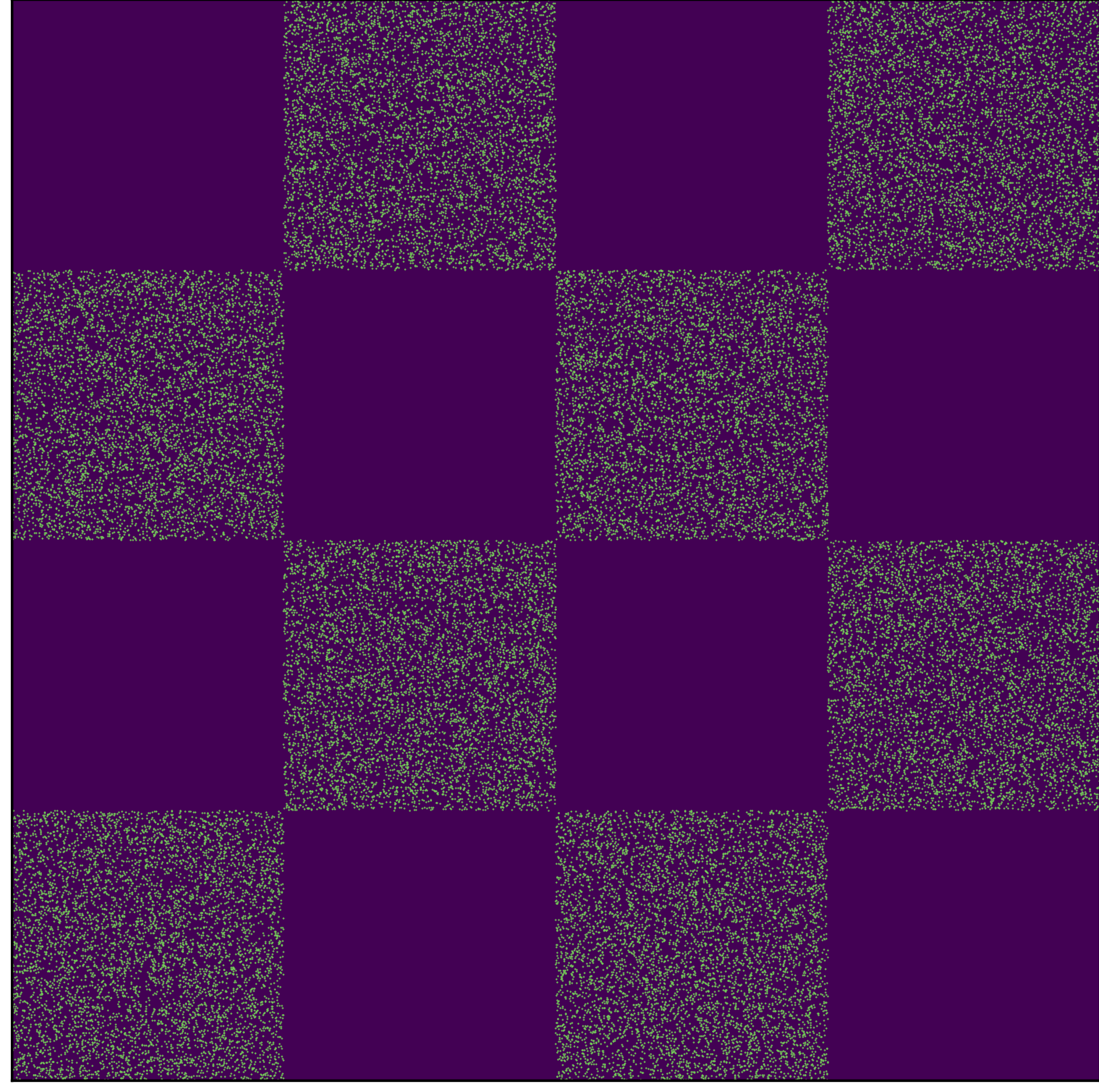}
    \end{subfigure}
    ~
    \centering
    \begin{subfigure}[b]{0.1\textwidth}
        \includegraphics[width=\textwidth]{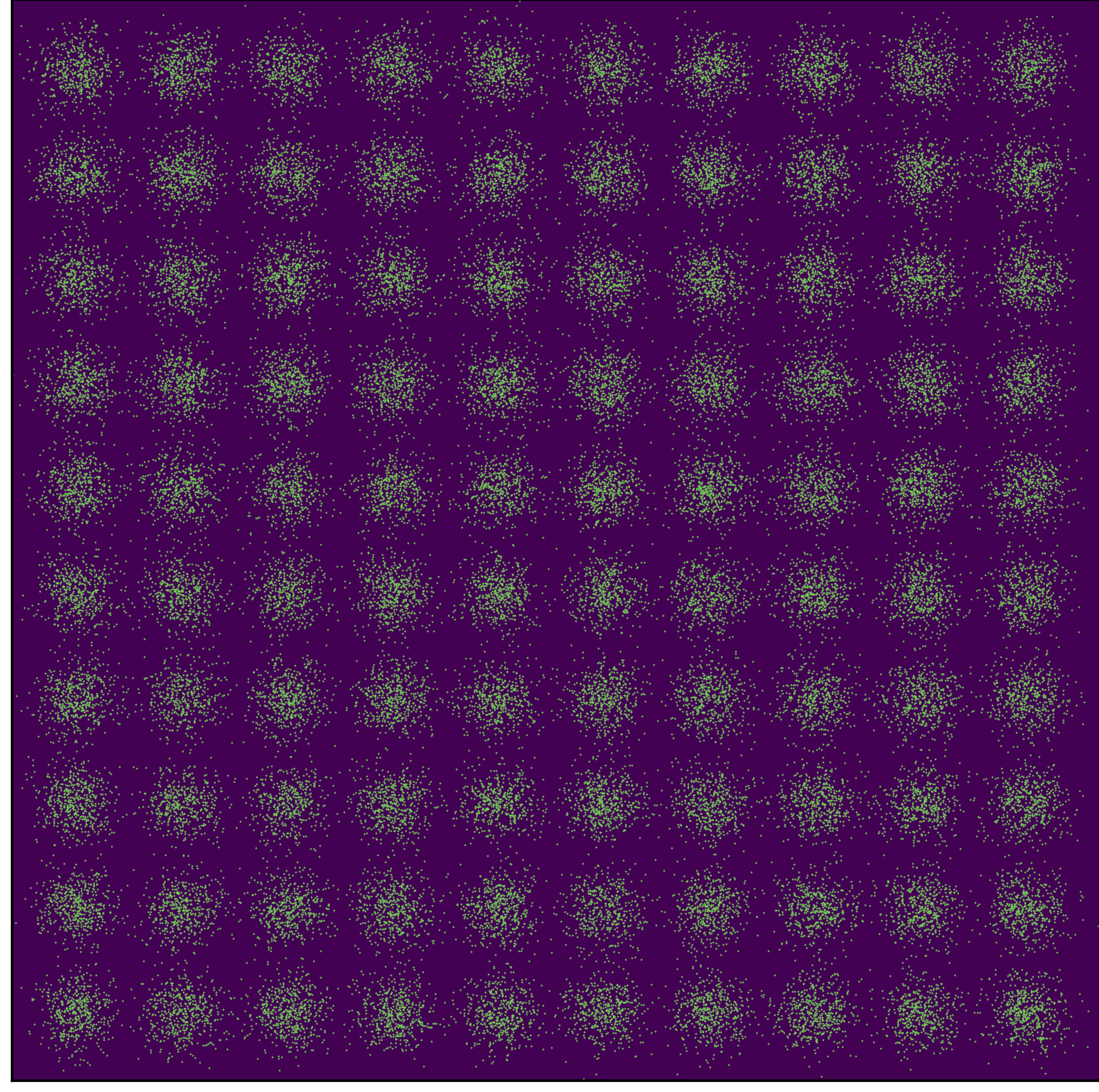}
    \end{subfigure}
    ~
    \begin{subfigure}[b]{0.1\textwidth}
        \includegraphics[width=\textwidth]{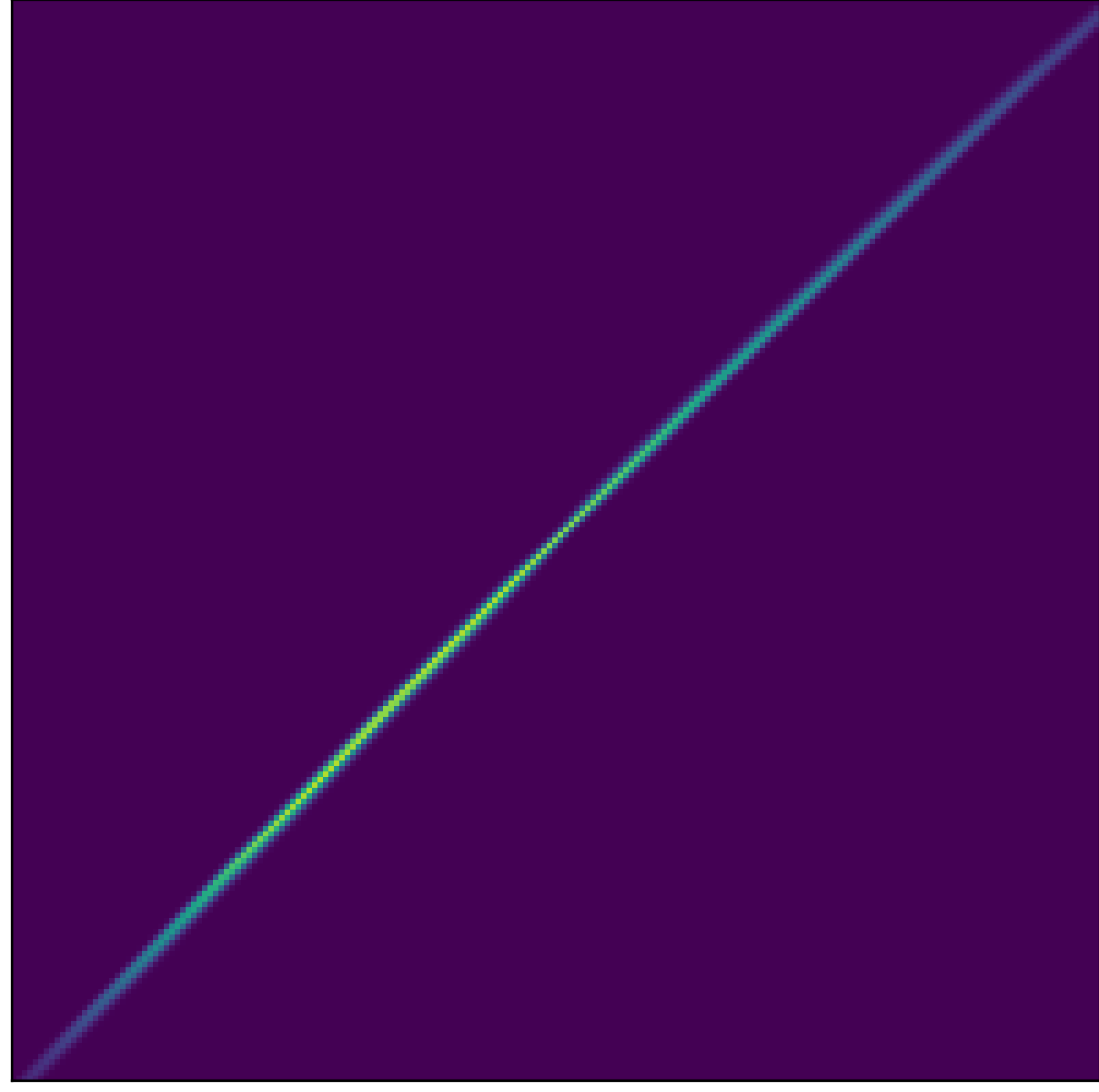}
    \end{subfigure}
    ~
    \begin{subfigure}[b]{0.1\textwidth}
        \includegraphics[width=\textwidth]{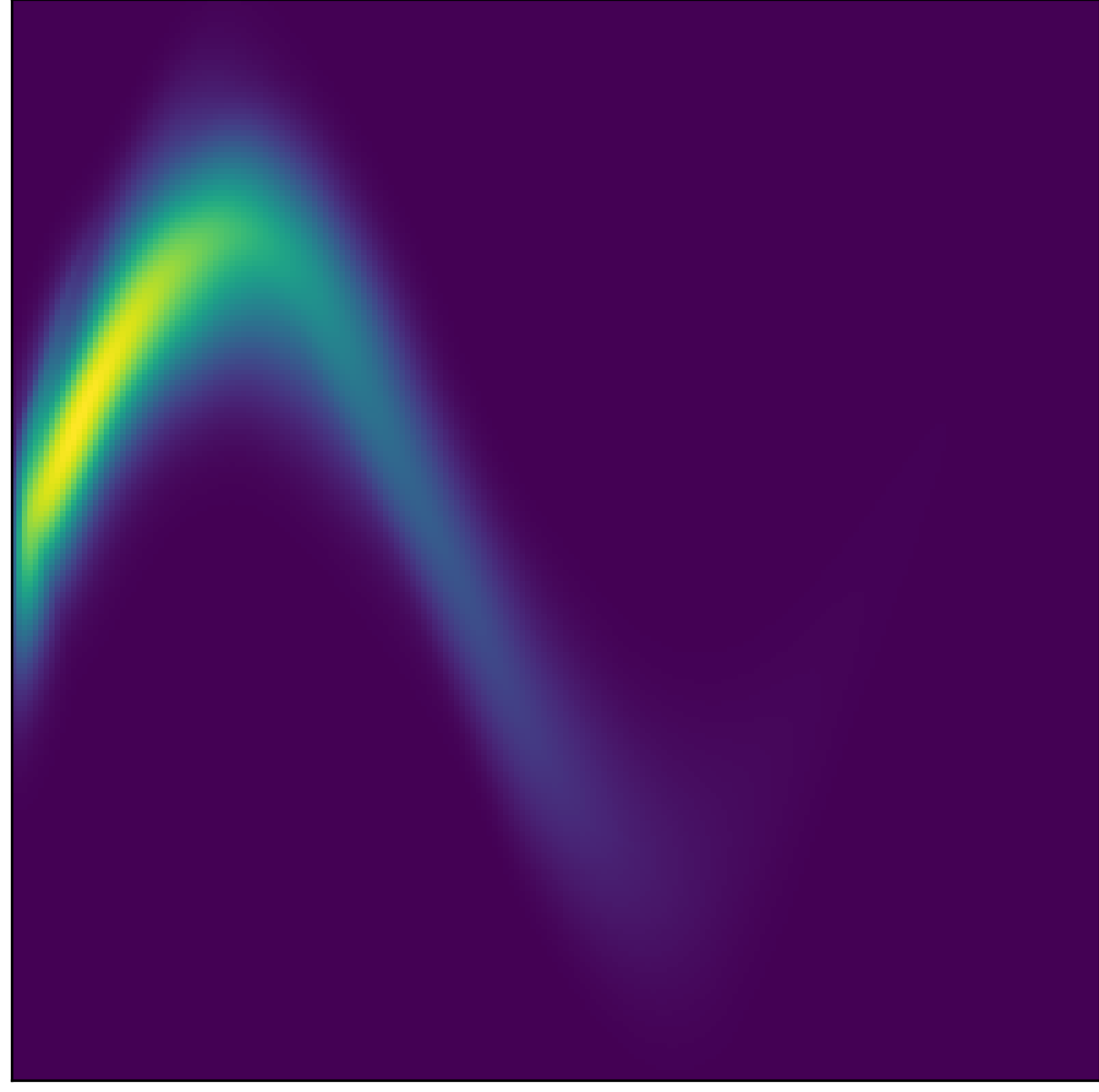}
    \end{subfigure}
    ~
    \begin{subfigure}[b]{0.1\textwidth}
        \includegraphics[width=\textwidth]{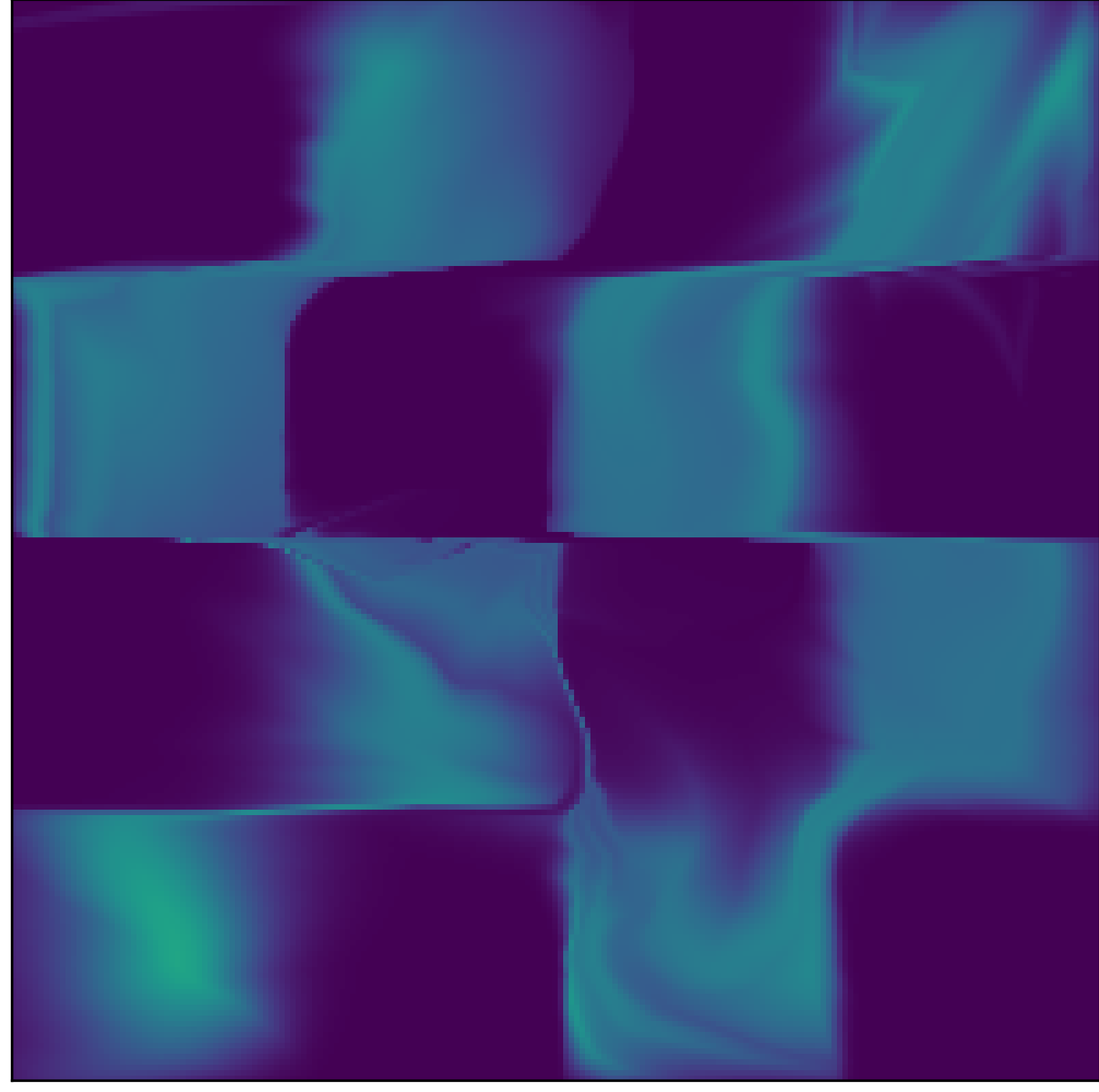}
    \end{subfigure}
    ~
    \begin{subfigure}[b]{0.1\textwidth}
        \includegraphics[width=\textwidth]{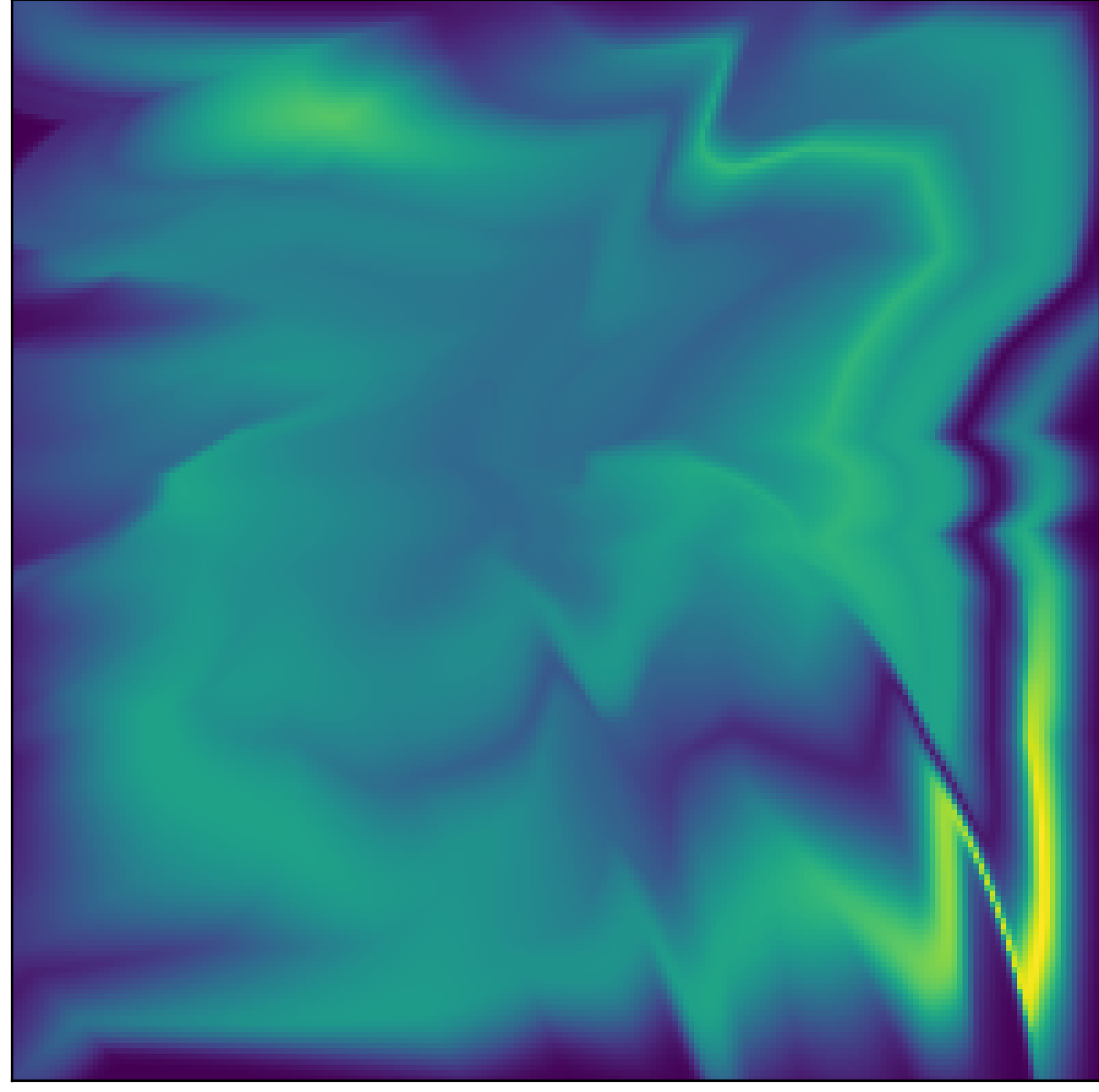}
    \end{subfigure}
    ~ 
    \begin{subfigure}[b]{0.1\textwidth}
        \includegraphics[width=\textwidth]{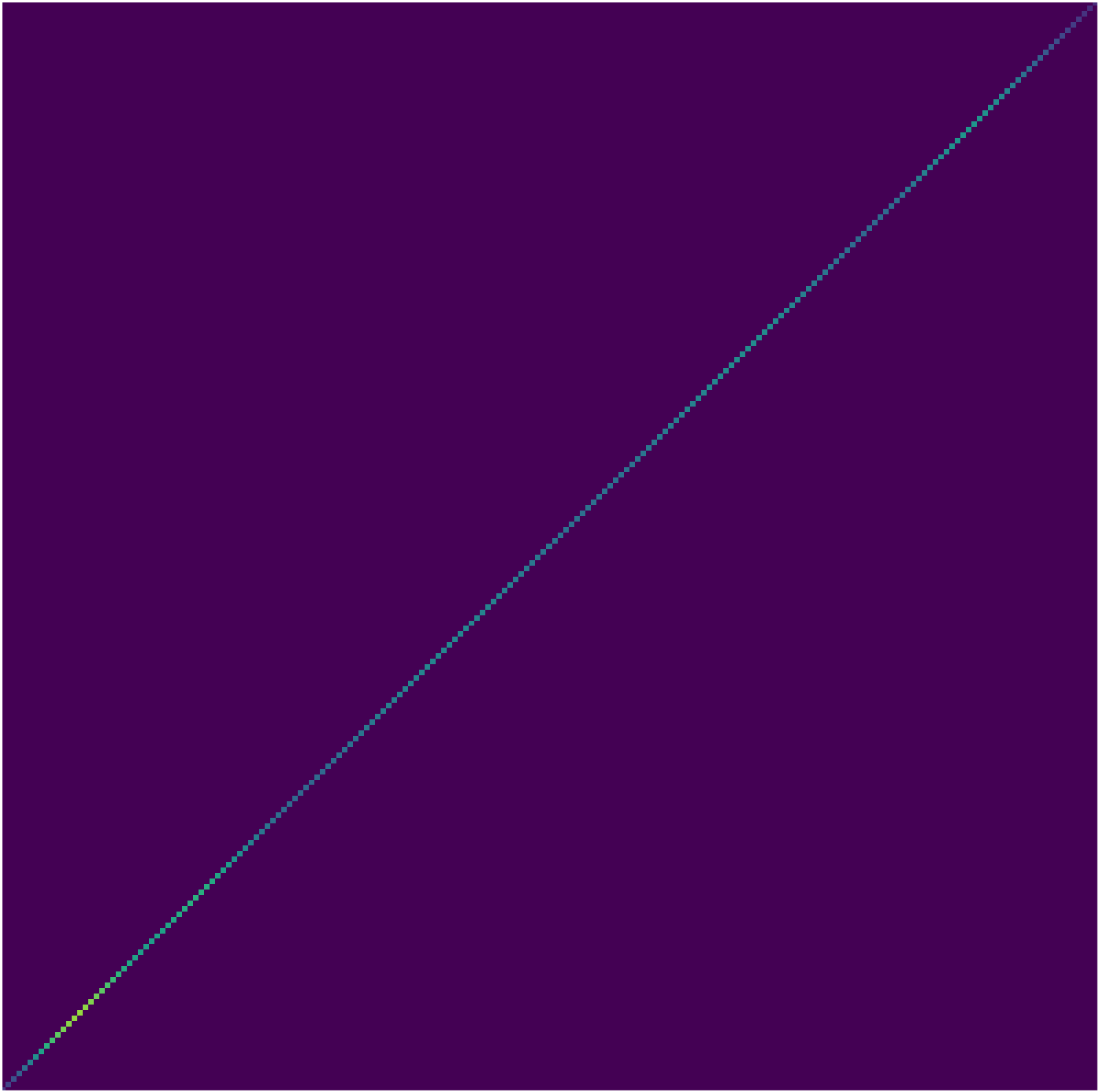}
    \end{subfigure}
    ~
    \begin{subfigure}[b]{0.1\textwidth}
        \includegraphics[width=\textwidth]{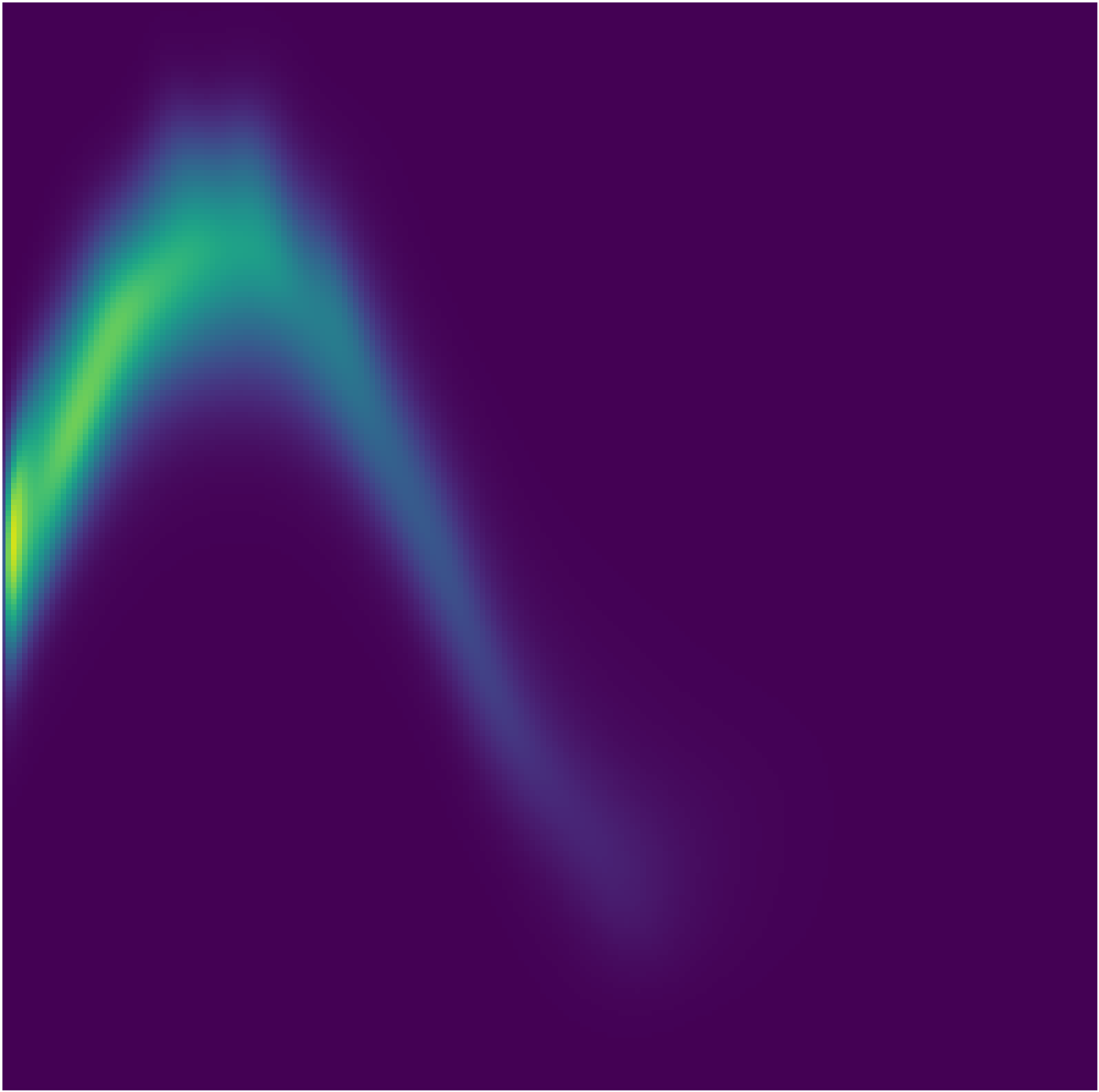}
    \end{subfigure}
    ~
    \begin{subfigure}[b]{0.1\textwidth}
        \includegraphics[width=\textwidth]{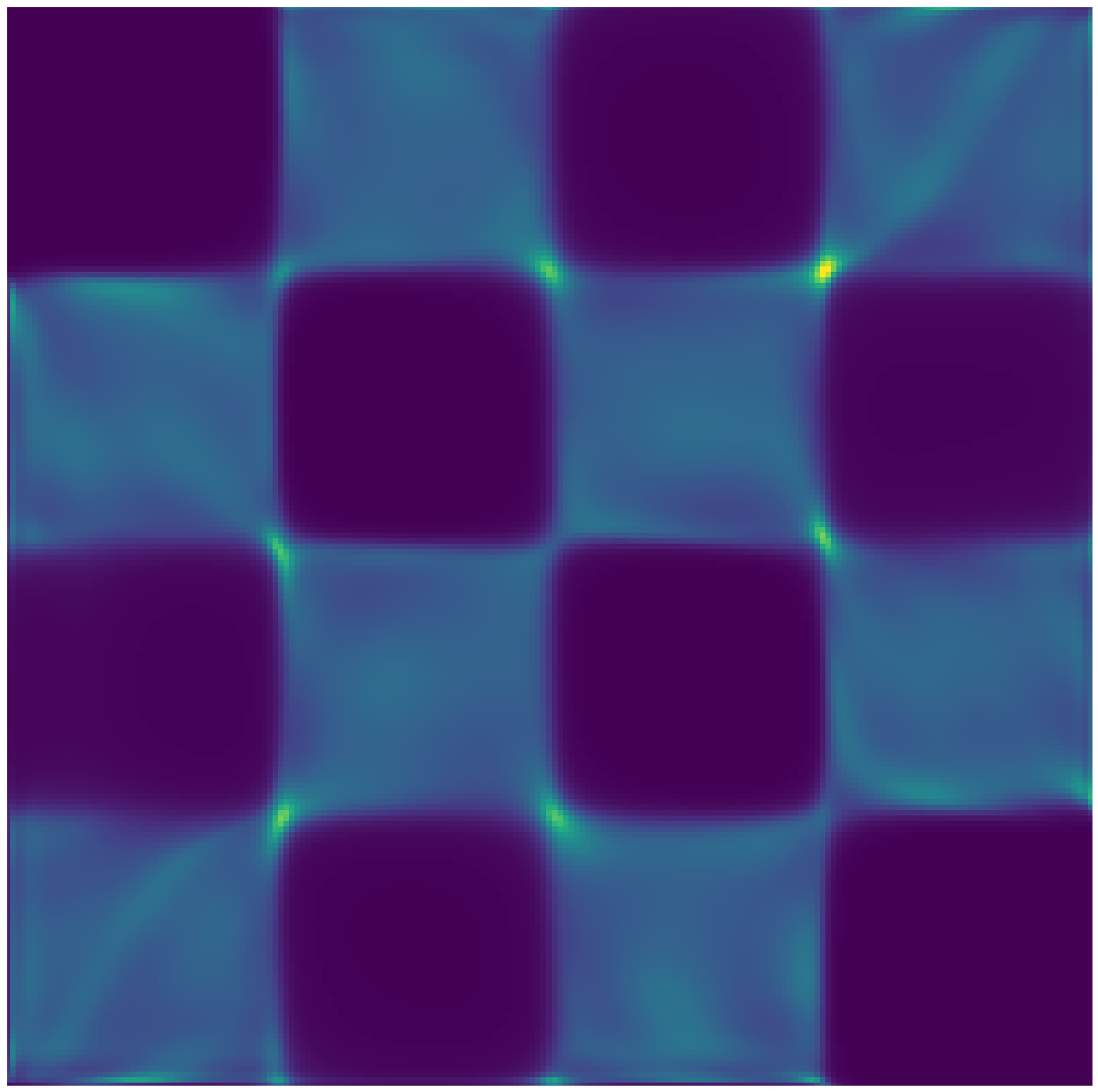}
    \end{subfigure}
    ~
    \begin{subfigure}[b]{0.1\textwidth}
        \includegraphics[width=\textwidth]{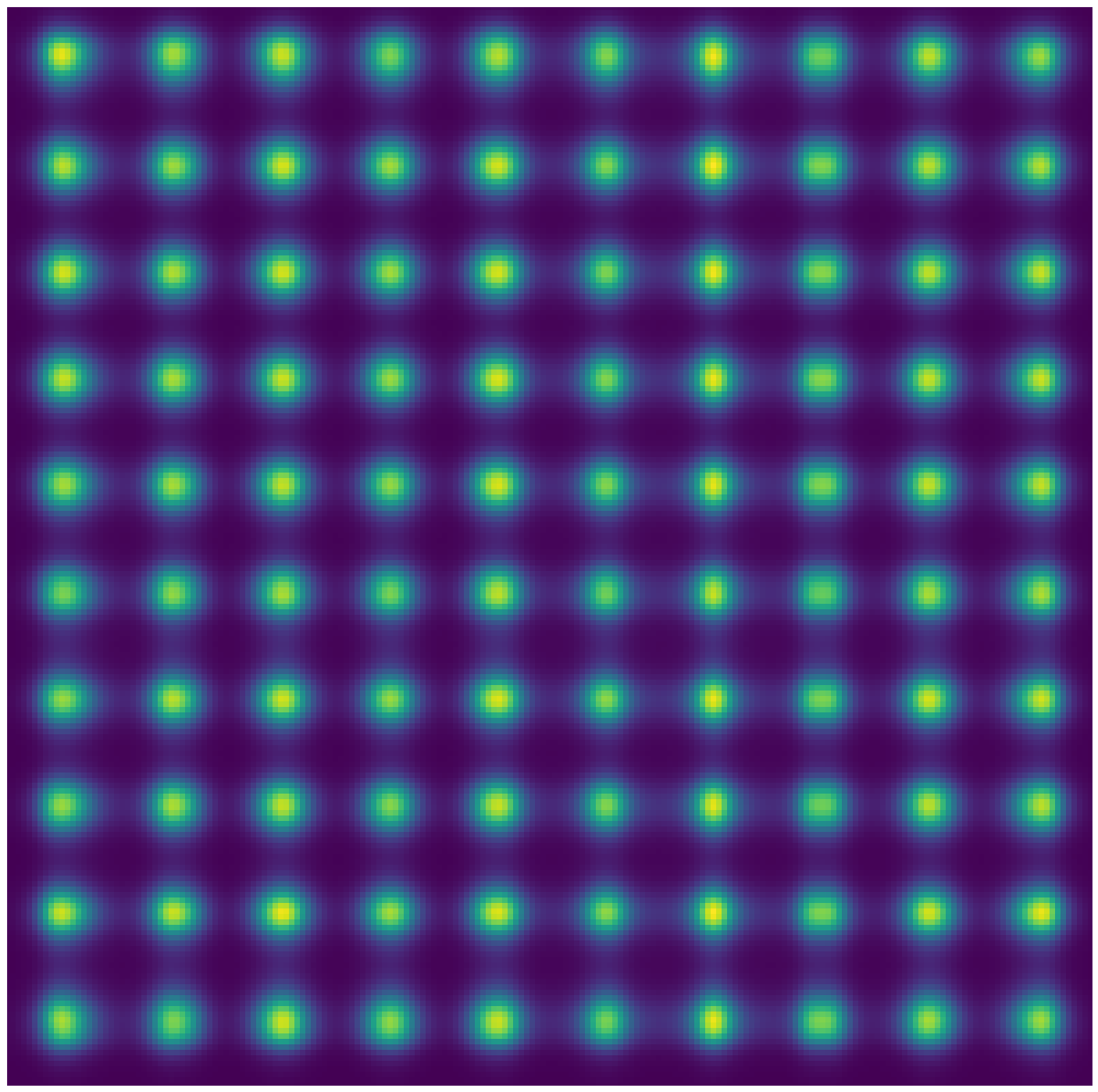}
    \end{subfigure}
    
    \caption{2D density estimation results. \textbf{Top:} Ground truth samples. \textbf{Middle:} Glow. \textbf{Bottom:}
    GF. 
    } 
    \label{fig:toy_samples}
\end{figure}

Our experiments aim to answer the following questions:
\begin{itemize}
    \item Is GF competitive against other methods in terms of density estimation (\ref{sec:exp-toy}, \ref{sec:exp-tab})?
    \item Does GF have better initialization than other normalizing flow models?
    \item Is GF robust against re-parameterization of the data with simple transformations (\ref{sec:exp-strech})?
    \item Does GF achieve good performance when the training set is small (\ref{sec:exp-subset})?
\end{itemize}

\begin{figure*}%
    \centering
        \includegraphics[width=0.8\textwidth]{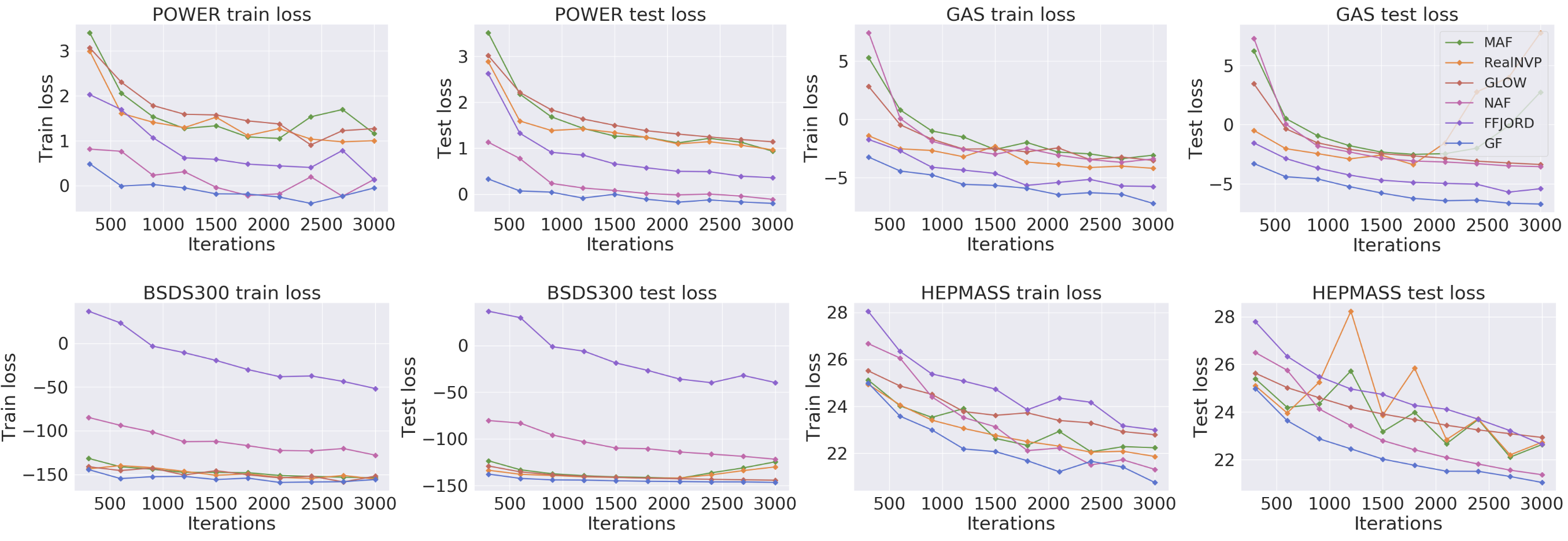}
    \caption{Negative log-likelihood (loss in nats) on training and test sets over initial training iterations. 
    }
    
    \label{fig:initialization}
\end{figure*}

\begin{table*}
 \caption{Negative log-likelihood for tabular datasets measured in nats, and image datasets measured in bpd. Smaller values are better. %
 } \label{tab:tabular_table}
\begin{center}
	\begin{adjustbox}{max width=0.85\textwidth}

    \begin{tabular}{p{2.6cm} |c c c c c |c c c}
        \toprule
        Method &POWER & GAS &HEPMASS &MINIBOONE &BSDS300 & MNIST & FMNIST\\
        \midrule
        Real NVP &-0.17 &-8.33 & 18.71 &13.55 &-153.28 &1.06 &\textbf{2.85}\\
        Glow &-0.17 &-8.15 & 18.92 &11.35 &-155.07 &1.05 &2.95\\
        FFJORD &-0.46 &-8.59 &\textbf{14.92} &10.43 &\textbf{-157.40} &\textbf{0.99} &-\\
        RBIG &1.02 &0.05 & 24.59 &25.41 &-115.96 &1.71 &4.46\\
        GF(ours) &\textbf{-0.57} &\textbf{-10.13} & 17.59 &\textbf{10.32} &-152.82 & 1.29 & 3.35\\
        \midrule
        MADE &3.08 &-3.56 &20.98 &15.59 &-148.85 &2.04 &4.18\\
        MAF &-0.24 &-10.08 &17.70 &11.75 &-155.69 &1.89 &-\\
        TAN &-0.48 &-11.19 &15.12 &11.01 &-157.03 &- &-\\
        MAF-DDSF &-0.62 &-11.96 &15.09 &8.86 &-157.73 &- &-\\
        \bottomrule
    \end{tabular} 
    \end{adjustbox}
\end{center}
\end{table*}

\subsection{2D Toy Datasets}
\label{sec:exp-toy}
We first perform density estimation on four synthetic datasets drawn from complex two-dimensional distributions with various shapes and number of modes. We train the model by warping the predicted probability distribution to an isotropic Gaussian distribution. In \figref{fig:toy_samples}, we visualize the estimated density of our Gaussianization Flow and Glow. The results show that our model is capable of fitting both continuous and discontinuous, connected and disconnected multi-modal distributions. Glow, on the other hand, has trouble modeling disconnected distributions.

\subsection{Tabular and Image Datasets}
\label{sec:exp-tab}
We perform density estimation on five tabular datasets which are preprocessed using the method in \citet{maf}. We compare our results directly with RealNVP, Glow and FFJORD as these are also %
efficiently invertible models which can be used  for sample generation and inference; we list MAF, MADE, TAN and NAF results as reference as they have higher computational costs in sampling but are competitive in density estimation. 
From \tabref{tab:tabular_table}, we observe that GF achieves the top negative log-likelihood results in 3 out of 5 tabular datasets, and obtain comparable results on the remaining two. As expected, Gaussianization flow outperforms RBIG on all tasks by a large margin, which demonstrates the strong advantages of joint training by maximum likelihood.

For tabular datasets, %
we use $D$ Householder reflections for each \textit{trainable rotation matrix layer} where $D$ equals the data dimension, so that the model possesses the ability to parameterize all possible rotation matrices. See \appref{sec:settings} for more training details. 

We also consider two image datasets, MNIST and Fashion-MNIST, and perform density estimation on the continuous distribution of uniformly dequantized images (see \tabref{tab:tabular_table}). For image data, we use patch-based rotation matrices as \textit{trainable rotation matrix layers}. Specifically, we set the patch size to 4 and randomly pick the shifting constant $c$ at each layer. We provide more training details in Appendix \ref{sec:settings}. From the results in \tabref{tab:tabular_table} we see that Gaussianization flow outperforms all other non-convolutional models on image datasets, including those that cannot be inverted efficiently, such as MAF and MADE~\citep{made}.

\subsection{Initial Performance}
The data-dependent initialization of our model allows the training process to converge faster.
To illustrate this, we choose four tabular datasets (pre-processed as described in~\citet{maf}), where we set the batch size to be 500 and perform training for 3000 iterations using the default settings and model architectures. From the results in \figref{fig:initialization}, Gaussianization flow achieves better training and validation performance across most iterations on the four datasets compared with other models such as RealNVP, Glow, FFJORD, MAF and NAF.

\subsection{Stretched Tabular Datasets}
\label{sec:exp-strech}
\begin{table*}

 \caption{Negative log-likelihood in nats for tabular datasets after simple transformations. ``$*$'' stands for loss larger than 1000. ``$**$'' implies loss does not converge and varies largely on different batches. ``-'' implies loss explosion on validation and test sets. ``NaN'' implies numerical issues encountered during training. Numbers in parentheses for GF denote the corresponding likelihood value under the original normalized transformation.
 } 
 \label{tab:transformation_table}
\begin{center}
\begin{adjustbox}{max width=0.9\textwidth}
    \begin{tabular}{l |c c c| c c c}
        \toprule
        Transformation&\multicolumn{3}{c|}{$f(x)=x^3$} & \multicolumn{3}{c}{$f(x)=1000x+51$} \\
        \midrule
        Method &POWER &MINIBOONE &GAS &POWER &MINIBOONE &GAS\\
        \midrule
        Real NVP &17.47~(21.53) &93.98~(109.96) &32.27~(32.85)  &- &- &-\\
        Glow &1.67~(5.73) &91.86~(107.84) &- &41.64~(0.19) &\text{315.30~(18.27)} &49.26~(-6.00)\\
        FFJORD &$*$ &88.29~(104.27) &$**$ &$*$ &329.97~(32.94) &$*$\\
        GF(ours) &\textbf{-4.41~(-0.35)} &\textbf{4.62~(20.60)} &\textbf{-6.91~(-6.33)} & \textbf{41.00(-0.45)} &325.72~(28.69) &\textbf{47.69~(-7.57)}\\
        \midrule
        MAF &19.37~(23.43) &381.32~(397.3) &19.76~(20.34) &- &- &- \\
        MAF-DDSF &-4.12~(-0.06) &7.88~(23.86) &-4.52~(-3.94) &NaN &NaN &NaN\\
        \bottomrule
    \end{tabular} 
\end{adjustbox}
\end{center}
\end{table*}

\begin{figure*}%
    \centering
    \includegraphics[width=0.9\textwidth]{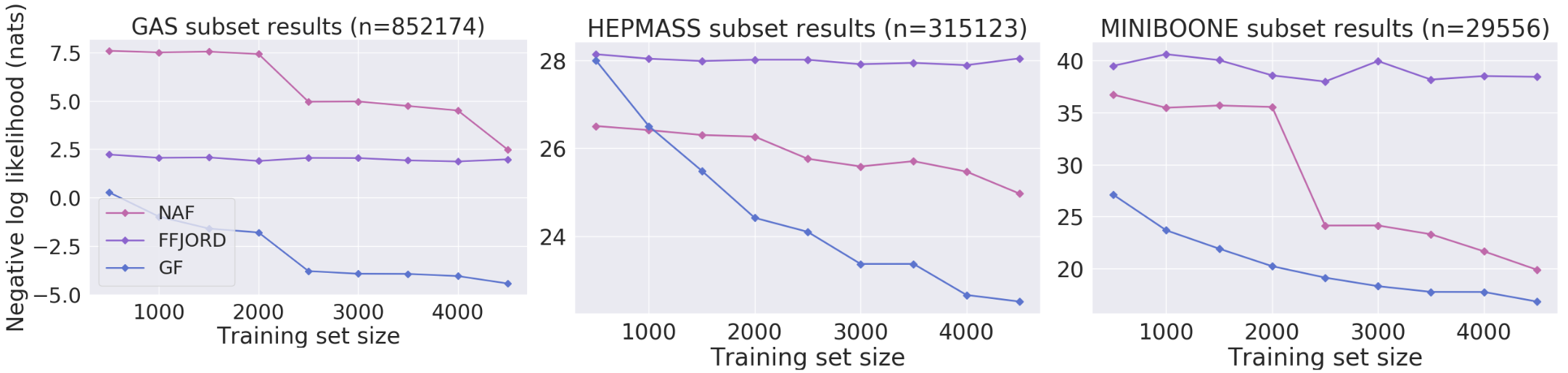}
    \caption{Negative log-likelihood results (measured in nats) over small subsets of the original training set. The subsets, with size ranging from $500$ to $4500$, are much smaller than the original training set size $n$ as shown in the parentheses. We exclude MAF, RealNVP and Glow in the figures as validation error does not decrease. 
    }
    \label{fig:subset}
\end{figure*}

In density estimation applications (such as anomaly detection), one could receive a stream of data that might not be sampled \iid from a fixed distribution. In these cases, it would be difficult to find suitable pre-processing techniques to normalize the data, so it is desirable if our models can be robust under distributions that are not normalized.
To evaluate whether the flow models are robust against certain distribution shifts (that could make normalization difficult), we consider density estimation on datasets that are not normalized. In particular, we select three pre-processed UCI datasets and transform the data using some simple invertible transformations before training. Here we keep the transformations invertible and differentiable so that we can use the change of variables formula to compute the likelihoods defined in the original data space. We consider two transformations: \textit{cubic}, where $f(x) = x^3$; and \textit{affine} where $f(x) = 1000x + 51$.

From the results in \tabref{tab:transformation_table}, we observe that Gaussianization flows have stable and consistent performance for both transformations and on all three datasets. In contrast, all other methods can fail in some settings. MAF-DDSF has numerical issues that lead to NaNs on datasets processed with the \textit{affine} transformation; RealNVP, Glow, and MAF all have cases where test loss does not go down when training loss goes down; FFJORD has convergence issues when training on GAS with the \textit{cubic} transformation, and on POWER and GAS with the \textit{affine} transformation. Moreover, even with the added transformation, we are still able to obtain comparable likelihood when transformed back to the original space (see \tabref{tab:transformation_table} results in parentheses).%

\subsection{Small Training Sets}%
\label{sec:exp-subset}
The ability to quickly adapt to new distributions with relatively few samples (\eg in a stream of data with continuous covariate shifts) can also be helpful. To this end, we further evaluate the generalization abilities of the models when trained on small subsets of the tabular datasets. We consider using the normalized tabular datasets, where we mix the training, validation and test datasets, shuffle them randomly and select 10,000 samples as validation/test sets respectively. %
We consider training various model on small training subsets with sizes ranging from 500 to 4500, where we perform validation and testing on the new validation/test sets. 

We compare GF with Glow, RealNVP, MAF, FFJORD and NAF, %
using the same model architecture for the original tabular experiments and explore the learning rate to make the training process more stable. We show the results in \figref{fig:subset}. We note that MAF, Glow and RealNVP have trouble evaluating density on validation/test set when the training set is small enough, as the validation/test loss goes up as train loss goes down, which is the reason why we exclude them in the plots. GF significantly outperforms FFJORD and NAF in all settings except when subset size is 500 for HEPMASS, which suggests that our learnable KDE layers generalize well on test sets even when training data is scarce.

\section{CONCLUSION}
We introduce Gaussianization flows (GF), a new family of trainable flow models that builds upon rotation-based iterative Gaussianization. GFs exhibit fast likelihood evaluation and fast sample generation, and are expressive enough to be universal approximators for most continuous probability distributions. Empirical results demonstrate that GFs achieve better or comparable performance against existing state-of-the-art flow models that are efficiently invertible. Compared to other efficiently invertible models, GFs have better initializations, are more robust to distribution shifts in training data, and have superior generalization when training data are scarce. Combining the advantages of GFs with other efficiently invertible flow models would be an interesting direction for future research.

\subsubsection*{Acknowledgements}
This research was supported by  Amazon AWS, TRI, NSF (\#1651565, \#1522054, \#1733686), ONR (N00014-19-1-2145), AFOSR (FA9550-19-1-0024).

\bibliography{gaussianization}
\allowdisplaybreaks
\appendix
\FloatBarrier
\onecolumn
\section{PROOFS}\label{app:proof}
\subsection{Mixtures of Logistics are Universal Approximators}
First, we show that mixtures of logistics are universal approximators for any univariate continuous densities supported on a compact set. The proof is based on the classic result on uniform convergence of kernel density estimation (KDE):
\begin{lemma}[Uniform convergence of KDE~\citep{parzen1962estimation}]\label{lem:universal_kde}
Let $f$ be a continuous probability density function on a compact set $\mcal{X} \subset \mbb{R}$. Let $K(y)$ denote a symmetric probability density function over $\mbb{R}$ whose characteristic function is absolutely integrable. Let $\{x_1, x_2, \cdots, x_n\}$ be \iid samples from $f(x)$. Given any $h_n \to 0$ such that $h_n\sqrt{n} \to \infty$, we may define
\begin{align*}
    \hat{f}_n(x) = \frac{1}{nh_n}\sum_{i=1}^n K\left(\frac{x - x_i}{h_n}\right)
\end{align*}
as an approximation to $f(x)$ over $\mcal{X}$ with high probability; that is
\begin{align*}
    \max_{x\in\mcal{X}} |\hat{f}_n(x) - f(x)| \stackrel{p}{\to} 0,\quad \text{as $n\to\infty$}.
\end{align*}
\end{lemma}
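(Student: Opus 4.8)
The plan is to follow Parzen's Fourier-analytic argument, decomposing the error into a deterministic bias term and a stochastic fluctuation term via
\[
\hat f_n(x) - f(x) = \big(\mathbb{E}\hat f_n(x) - f(x)\big) + \big(\hat f_n(x) - \mathbb{E}\hat f_n(x)\big),
\]
and then showing that the first term tends to zero uniformly over $\mcal{X}$ while the second tends to zero in probability, uniformly over $\mcal{X}$. The triangle inequality then combines the two. The hypothesis that the characteristic function of $K$ is absolutely integrable is precisely what will make the second, harder term tractable by passing to the frequency domain.

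For the bias term I would write $\mathbb{E}\hat f_n(x) = (K_{h_n}*f)(x) = \int K(z)\,f(x - h_n z)\,\ud z$, where $K_{h_n}(u) = h_n^{-1}K(u/h_n)$ acts as an approximate identity because $K$ is a probability density and $h_n \to 0$. Then
\[
|\mathbb{E}\hat f_n(x) - f(x)| \le \int K(z)\,|f(x - h_n z) - f(x)|\,\ud z,
\]
which I would control by splitting at $|z|\le M$ and $|z|>M$: the tail $\int_{|z|>M}K(z)\,\ud z$ is small uniformly since $K$ integrates to one, while on the bulk the uniform continuity of $f$ (continuous on the compact $\mcal{X}$, extended by zero) forces $\sup_{|z|\le M}|f(x-h_n z)-f(x)|$ to vanish as $h_n\to 0$. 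This yields the deterministic statement $\sup_{x\in\mcal{X}}|\mathbb{E}\hat f_n(x)-f(x)|\to 0$.

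For the stochastic term I would introduce the population and empirical characteristic functions $\varphi(t)=\mathbb{E}[e^{itX}]$ and $\varphi_n(t)=\tfrac1n\sum_{j=1}^n e^{itx_j}$, together with $\psi(t)=\int K(y)e^{ity}\,\ud y$, the characteristic function of $K$. Fourier inversion (valid because $\psi$ is absolutely integrable) gives $\hat f_n(x)=\tfrac{1}{2\pi}\int \psi(h_n t)\,e^{-itx}\,\varphi_n(t)\,\ud t$ and the same representation for $\mathbb{E}\hat f_n(x)$ with $\varphi$ in place of $\varphi_n$, so that, using $|e^{-itx}|=1$,
\[
\sup_{x\in\mcal{X}}|\hat f_n(x)-\mathbb{E}\hat f_n(x)| \le \frac{1}{2\pi}\int |\psi(h_n t)|\,|\varphi_n(t)-\varphi(t)|\,\ud t.
\]
The key estimate is $\mathbb{E}|\varphi_n(t)-\varphi(t)|^2 = n^{-1}(1-|\varphi(t)|^2)\le n^{-1}$, hence $\mathbb{E}|\varphi_n(t)-\varphi(t)|\le n^{-1/2}$ by Jensen; taking expectations and substituting $u=h_n t$ gives
\[
\mathbb{E}\left[\sup_{x\in\mcal{X}}|\hat f_n(x)-\mathbb{E}\hat f_n(x)|\right] \le \frac{1}{2\pi\sqrt n}\int |\psi(h_n t)|\,\ud t = \frac{\norm{\psi}_1}{2\pi\,h_n\sqrt n}.
\]
Since $\norm{\psi}_1<\infty$ by hypothesis and $h_n\sqrt n\to\infty$, this vanishes, and Markov's inequality upgrades it to convergence in probability.

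The main obstacle, and the reason the characteristic-function hypothesis appears, is producing a bound on the stochastic term that is \emph{simultaneously} uniform in $x$ and compatible with the bandwidth schedule: a naive pointwise variance calculation yields fluctuations of order $(nh_n)^{-1/2}$ at each fixed $x$, and turning this into a uniform-in-$x$ control is delicate. The Fourier representation sidesteps the difficulty by converting the supremum over $x$ into the trivial bound $|e^{-itx}|=1$, collapsing everything to the single scalar $\norm{\psi}_1$; the rescaling $u=h_n t$ then produces exactly the factor $(h_n\sqrt n)^{-1}$, which is why the precise condition $h_n\sqrt n\to\infty$ is what is assumed. A minor secondary technicality is the boundary behavior of $f$ in the bias estimate, which I would handle by treating $f$ as its uniformly continuous zero-extension or by restricting the approximate-identity bound to the interior of $\mcal{X}$.
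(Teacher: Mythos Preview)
The paper does not supply its own proof of this lemma; it is stated as a classical result and attributed to \citet{parzen1962estimation}, then used as a black box in the proof of \lemref{lem:universal}. Your sketch is exactly Parzen's original argument---the bias/fluctuation split, the approximate-identity bound for the bias, and the Fourier-analytic control of the stochastic term via $\mathbb{E}|\varphi_n(t)-\varphi(t)|\le n^{-1/2}$ combined with $\norm{\psi}_1<\infty$---and it is correct, including your identification of why the specific hypothesis $h_n\sqrt n\to\infty$ and the absolute integrability of $\psi$ are the right conditions. The only point needing care, which you already flag, is the bias bound near $\partial\mcal{X}$ when $f$ does not vanish there; Parzen's formulation handles this by working on all of $\mbb{R}$ with a uniformly continuous $f$, and the compact-support version stated in the paper can be recovered by any of the standard fixes you mention.
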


The universal approximation of mixtures of logistics is an immediate result of \lemref{lem:universal_kde}.

\begin{lemma}[Mixtures of logistics are universal approximators to probability densities]\label{lem:universal}
Let $\mcal{X}$ be any compact subset of $\mbb{R}$. Let $\mcal{P}(\mcal{X})$ denote the space of continuous probability densities on $\mcal{X}$. Then, given any $\epsilon > 0$ and any probability density $p \in \mcal{P}(\mcal{X})$, there exist an integer $N$, real constants $\mu_i \in \mbb{R}, \sigma_i \in \mbb{R}^{++}$ for $i =1,\cdots,N$ such that we may define 
\begin{align*}
    f_N(x) = \frac{1}{N} \sum_{i=1}^N \frac{e^{-\frac{x-\mu_i}{\sigma_i}}}{\sigma_i \left(1 + e^{-\frac{x-\mu_i}{\sigma_i}} \right)^2}
\end{align*}
as an approximate realization of the probability density $p$ over $\mcal{X}$; that is
\begin{align*}
    \sup_{x\in\mcal{X}} |f_N(x) - p(x)| < \epsilon.
\end{align*}
In other words, functions of the form $f_N(x)$ are dense in $\mcal{P}(\mcal{X})$.
\end{lemma}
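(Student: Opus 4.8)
The plan is to deduce this immediately from \lemref{lem:universal_kde} by observing that the summands defining $f_N$ are exactly a kernel density estimator built with the \emph{logistic} kernel. First I would set $K(y) = e^{-y}/(1+e^{-y})^2$, the density of the standard logistic distribution, and check that it satisfies the hypotheses of \lemref{lem:universal_kde}: it is a genuine probability density on $\mbb{R}$, it is symmetric about the origin ($K(-y) = K(y)$), and its characteristic function $t \mapsto \pi t/\sinh(\pi t)$ decays exponentially as $|t|\to\infty$ and is therefore absolutely integrable. This last verification is the only place a real computation enters, and I expect it to be the main — though quite mild — obstacle; everything afterwards is bookkeeping.

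With $K$ fixed, I would draw $\{x_1,\dots,x_n\}$ \iid from $p$ and pick a bandwidth sequence meeting the conditions of \lemref{lem:universal_kde}, for instance $h_n = n^{-1/4}$, so that $h_n \to 0$ and $h_n\sqrt{n} \to \infty$. The resulting estimator $\hat{f}_n(x) = \frac{1}{nh_n}\sum_{i=1}^n K\!\left(\frac{x-x_i}{h_n}\right)$ is precisely of the form $f_N$ in the statement with $N = n$, $\mu_i = x_i$, and $\sigma_i = h_n > 0$, since $\frac{1}{h_n}K\!\left(\frac{x-\mu_i}{h_n}\right) = \frac{e^{-(x-\mu_i)/h_n}}{h_n\left(1 + e^{-(x-\mu_i)/h_n}\right)^2}$.

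Finally I would apply \lemref{lem:universal_kde} to get $\max_{x\in\mcal{X}} |\hat{f}_n(x) - p(x)| \stackrel{p}{\to} 0$. Convergence in probability implies that for any $\epsilon > 0$ there is an $n$ large enough that $\Pr\!\left(\max_{x\in\mcal{X}} |\hat{f}_n(x) - p(x)| < \epsilon\right) > 0$; in particular this event is nonempty, so there exists at least one realization $x_1,\dots,x_n$ of the sample for which $\sup_{x\in\mcal{X}} |\hat{f}_n(x) - p(x)| < \epsilon$. Taking that realization furnishes the desired $N$, $\mu_i$, $\sigma_i$, and since $\epsilon > 0$ and $p \in \mcal{P}(\mcal{X})$ were arbitrary, the family $\{f_N\}$ is dense in $\mcal{P}(\mcal{X})$. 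One minor point to flag in the write-up: \lemref{lem:universal_kde} requires the target density to be continuous on a compact set, which is exactly the standing hypothesis on $p$, so no additional regularity is needed.
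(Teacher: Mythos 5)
Your proposal is correct and follows essentially the same route as the paper's proof: identify $f_N$ as a logistic-kernel KDE, verify the kernel meets the hypotheses of \lemref{lem:universal_kde}, and use the positive probability of the uniform-convergence event to extract a deterministic realization $\{x_i\}$ giving the desired $\mu_i$ and $\sigma_i$. Your explicit computation of the characteristic function $\pi t/\sinh(\pi t)$ is a nice touch the paper only asserts implicitly.
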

\begin{proof}
We note that $f_N(x)$ is a kernel density estimator on data points $\{\mu_i\}_{i=1}^N$, with the logistic kernel
\begin{align*}
    K(y) = \frac{e^{-y}}{(1+e^{-y})^2}.
\end{align*}
It is a continuous probability distribution (logistic distribution), symmetric, and has an absolutely integrable characteristic function. Using \lemref{lem:universal_kde} we conclude that for any $\epsilon > 0$ and $0 < \delta < 1$, there exists $N$ and $h_N$ such that
\begin{align*}
    \Pr_{X_1, \cdots, X_N \sim p(X)}\left[\sup_{x\in\mcal{X}} \left|  \frac{1}{N h_N} \sum_{i=1}^N \frac{e^{-\frac{x-X_i}{h_N}}}{ \left(1 + e^{-\frac{x-X_i}{h_N}} \right)^2} - p(x) \right| < \epsilon \right] > 1 - \delta > 0.
\end{align*}
Since the probability is non-zero, there exists $x_1, x_2, \cdots, x_N$ such that
\begin{align*}
    \sup_{x\in\mcal{X}}\left| \frac{1}{N} \sum_{i=1}^N \frac{e^{-\frac{x-X_i}{h_N}}}{h_N \left(1 + e^{-\frac{x-X_i}{h_N}} \right)^2} - p(x) \right| < \epsilon.
\end{align*}
Letting $\mu_i = x_i$ and $\sigma_i = h_N$ for $i =1,2,\cdots, N$, we have
\begin{align*}
    \sup_{x\in\mcal{X}} |f_N(x) - p(x)| < \epsilon,
\end{align*}
which proves our statement.
\end{proof}

\begin{lemma}\label{lem:universal_kl}
    Let $p$ be a continuous probability distribution on a compact set $\mcal{X} \subset \mbb{R}$. Assume there exists $\delta > 0$ such that $\inf_{x\in\mcal{X}} p(x) > \delta$. Then, given any $\epsilon > 0$, there exist an integer $N$, real constants $\mu_i \in \mbb{R}, \sigma_i \in \mbb{R}^{++}$ for $i =1,\cdots,N$ such that we may define
    \begin{align*}
        f_N(x) = \frac{1}{N} \sum_{i=1}^N \frac{e^{-\frac{x-\mu_i}{\sigma_i}}}{\sigma_i \left(1 + e^{-\frac{x-\mu_i}{\sigma_i}} \right)^2}
    \end{align*}
    to approximate $p(x)$ in terms of KL divergence; that is
    \begin{align*}
        \KLD{p(x)}{f_N(x)} < \epsilon.
    \end{align*}
\end{lemma}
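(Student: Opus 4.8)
The plan is to upgrade the sup-norm approximation of \lemref{lem:universal} to a KL-divergence approximation, using the strict positivity lower bound $\inf_{x\in\mcal{X}} p(x) > \delta$ to control the ratio $p/f_N$ from below. First I would observe that $\KLD{p(x)}{f_N(x)} = \int_{\mcal{X}} p(x) \log\frac{p(x)}{f_N(x)}\,\ud x$, so it suffices to make $\log(p(x)/f_N(x))$ uniformly small on $\mcal{X}$ and then integrate against the probability measure $p$ (whose total mass is $1$, and whose support $\mcal{X}$ has finite Lebesgue measure since it is compact). So the real target is a \emph{relative} sup-norm bound: given $\eta > 0$, produce $f_N$ with $\sup_{x\in\mcal{X}} |p(x) - f_N(x)| < \eta$, which by \lemref{lem:universal} is available for any $\eta$.

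The key step is converting additive error into multiplicative error. Since $p(x) \geq \delta$ for all $x \in \mcal{X}$, choosing $\eta < \delta/2$ in \lemref{lem:universal} guarantees $f_N(x) \geq p(x) - \eta \geq \delta/2 > 0$ everywhere on $\mcal{X}$, so $\log(p/f_N)$ is well-defined and bounded there. Then for each $x$,
\begin{align*}
    \left|\log\frac{p(x)}{f_N(x)}\right| = \left|\log\left(1 + \frac{p(x) - f_N(x)}{f_N(x)}\right)\right| \leq \frac{|p(x) - f_N(x)|}{f_N(x)} \cdot C \leq \frac{2\eta}{\delta} \cdot C,
\end{align*}
where I use the elementary inequality $|\log(1+t)| \leq C|t|$ valid for $t$ bounded away from $-1$ (here $|p - f_N|/f_N \leq 2\eta/\delta$, which I can make $\le 1/2$ by shrinking $\eta$). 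Integrating, $\KLD{p}{f_N} \leq \int_{\mcal{X}} p(x) \cdot \frac{2C\eta}{\delta}\,\ud x = \frac{2C\eta}{\delta}$. Given target $\epsilon$, I pick $\eta = \min\{\delta/2,\ \epsilon\delta/(2C)\}$ (or whatever the clean constants demand) and invoke \lemref{lem:universal} with that $\eta$ to obtain the desired $N$, $\mu_i$, $\sigma_i$.

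One subtlety to handle carefully: although \lemref{lem:universal} produces a density $f_N$, the integral $\int_{\mcal{X}} f_N\,\ud x$ over the \emph{compact set} $\mcal{X}$ may be slightly less than $1$ (mass leaks outside $\mcal{X}$), so $\KLD{p}{f_N}$ as a divergence between measures on $\mbb{R}$ is still fine since $p$ is supported on $\mcal{X}$ and the integrand only sees $x \in \mcal{X}$ — so this causes no actual problem, but I would state it explicitly to avoid confusion. I expect the main obstacle to be purely bookkeeping: getting the chain of constraints on $\eta$ consistent (it must simultaneously beat $\delta/2$ to keep $f_N$ positive, keep the ratio small enough for the $\log$ inequality, and be small enough that the final bound is below $\epsilon$), together with justifying the $|\log(1+t)|\le C|t|$ step on the relevant range of $t$. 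None of this is deep; the substantive content is entirely carried by \lemref{lem:universal} plus the positivity hypothesis.
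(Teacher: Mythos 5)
Your proposal is correct and follows essentially the same route as the paper: invoke the uniform approximation of \lemref{lem:universal}, use the lower bound $\delta$ to convert the additive error into a uniform bound on $|\log p - \log f_N|$ (the paper does this via the Lipschitz constant $\sup_{t \ge \delta - \epsilon_1} 1/t$ of the logarithm, which is equivalent to your $|\log(1+t)| \le C|t|$ step), and integrate against $p$. The only differences are cosmetic choices of constants.
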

\begin{proof}
    It suffices to show that
    \begin{align*}
        \left| \int_{x\in\mcal{X}} p(x) \log f_N(x)\ud x - \int_{x\in\mcal{X}} p(x) \log p(x)\ud x\right| < \epsilon.
    \end{align*}
    Using \lemref{lem:universal} we know that
    \begin{align*}
        \sup_{x\in\mcal{X}} |f_N(x) - p(x)| \to 0, \quad \text{as $N \to \infty$}.
    \end{align*}
    For $\epsilon_1 > 0$, there exists an integer $N$, real constants $\mu_i \in \mbb{R}, \sigma_i \in \mbb{R}^{++}$ for $i =1,\cdots,N$ such that
    \begin{align*}
        \sup_{x\in\mcal{X}} |f_N(x) - p(x)| < \epsilon_1.
    \end{align*}
    Because $p(x) > \delta$, we know $f_N(x) > \delta - \epsilon_1$, in which case 
    \begin{align*}
        \sup_{x\in\mcal{X}}|\log f_N(x) - \log p(x)| \leq \left(\sup_{t \geq \delta - \epsilon_1} \frac{1}{t}\right)\left(\sup_{x\in\mcal{X}} |f_N(x) - p(x)|\right) \leq \frac{\epsilon_1}{\delta - \epsilon_1}.
    \end{align*}
    Then,
    \begin{align*}
        \left| \int_{x\in\mcal{X}} p(x) \log f_N(x)\ud x - \int_{x\in\mcal{X}} p(x) \log p(x)\ud x\right| &\leq \int_{x\in\mcal{X}}p(x)|\log f_N(x) - \log p(x)|\ud x \\
        &\leq \int_{x\in\mcal{X}} p(x) \sup_{x\in\mcal{X}}|\log f_N(x) - \log p(x)|\ud x\\
        &\leq \frac{\epsilon_1}{\delta - \epsilon_1}.
    \end{align*}
    Our statement is proved by setting $\epsilon_1 = \frac{\epsilon \delta}{1 + \epsilon}$.
\end{proof}
\begin{corollary}\label{cor:kl}
    Let $p$ be a continuous probability distribution on a compact set $\mcal{X} \subset \mbb{R}$. Assume there exists $\delta > 0$ such that $\inf_{x\in\mcal{X}} p(x) > \delta$. Then, given any $\epsilon > 0$, there exist an integer $N$, real constants $\mu_i \in \mbb{R}, \sigma_i \in \mbb{R}^{++}$ for $i =1,\cdots,N$ such that if we define
        \begin{align*}
            F_N(x) = \frac{1}{N} \sum_{i=1}^N \sigma\left( \frac{x - \mu_i}{\sigma_i} \right),
        \end{align*}
        where $\sigma(\cdot)$ is the sigmoid function, and transform random variable $X \sim p$ to $Y = \Phi^{-1}(F_N(X))$ where $\Phi(\cdot)$ is the CDF of standard normal, we have
        \begin{align*}
            \KLD{Y}{\mcal{N}(0,1)} < \epsilon.
        \end{align*}
\end{corollary}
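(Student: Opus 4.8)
The plan is to notice that $F_N$ is exactly the cumulative distribution function of the mixture of logistics $f_N$ appearing in \lemref{lem:universal_kl}, and then to reduce the claim to that lemma using the invariance of the KL divergence under a monotone reparametrization.

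First I would record the elementary structure of $F_N$. Each summand $\sigma((x-\mu_i)/\sigma_i)$ is the CDF of a logistic distribution whose density is the $i$-th term of $f_N$, so $F_N' = f_N$ on $\mbb{R}$, and $F_N$ is strictly increasing with $\lim_{x\to-\infty}F_N(x)=0$ and $\lim_{x\to+\infty}F_N(x)=1$. Hence $F_N:\mbb{R}\to(0,1)$ is a $C^\infty$ bijection, and composing with $\Phi^{-1}:(0,1)\to\mbb{R}$ yields a strictly increasing $C^\infty$ diffeomorphism $T \triangleq \Phi^{-1}\circ F_N:\mbb{R}\to\mbb{R}$. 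The key point is that this $T$ pushes $f_N$ forward onto the standard normal: if $Z\sim f_N$ then $F_N(Z)\sim\mathrm{Uniform}(0,1)$, hence $T(Z)=\Phi^{-1}(F_N(Z))\sim\mcal{N}(0,1)$.

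Next I would compute $\KLD{Y}{\mcal{N}(0,1)}$ for $Y=T(X)$ with $X\sim p$ via the one-dimensional change of variables. Writing $q$ for the density of $Y$ and $\varphi=\Phi'$ for the standard normal density, we have $q(y)=p(T^{-1}(y))\,\abs{(T^{-1})'(y)}$, and since $T$ also transports $f_N$ to $\mcal{N}(0,1)$, $\varphi(y)=f_N(T^{-1}(y))\,\abs{(T^{-1})'(y)}$. The Jacobian factors cancel in the ratio $q(y)/\varphi(y)=p(T^{-1}(y))/f_N(T^{-1}(y))$, so substituting $x=T^{-1}(y)$ gives
\begin{align*}
\KLD{Y}{\mcal{N}(0,1)}=\int q(y)\log\frac{q(y)}{\varphi(y)}\,\ud y=\int_{\mcal{X}}p(x)\log\frac{p(x)}{f_N(x)}\,\ud x=\KLD{p(x)}{f_N(x)}.
\end{align*}
Finally, \lemref{lem:universal_kl} applies because $p$ is continuous on the compact set $\mcal{X}$ with $\inf_{\mcal{X}}p>\delta>0$, and it supplies an integer $N$ and constants $\mu_i\in\mbb{R}$, $\sigma_i\in\mbb{R}^{++}$ with $\KLD{p(x)}{f_N(x)}<\epsilon$; the corresponding $F_N$ is the desired function.

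The one place needing care — and what I expect to be the main, though routine, obstacle — is making the change-of-variables step fully rigorous when $\mcal{X}$ is an arbitrary compact subset of $\mbb{R}$ rather than an interval: one must verify that the densities $q$ and $\varphi$ satisfy $q(y)/\varphi(y)=p(T^{-1}(y))/f_N(T^{-1}(y))$ for almost every $y\in T(\mcal{X})$ and that the substitution is valid there. Since $T$ is a global $C^1$ diffeomorphism of $\mbb{R}$ and all measures involved are absolutely continuous, this is standard; alternatively one can sidestep the explicit computation entirely by invoking the general invariance of the KL divergence under a common bijective transformation of its two arguments, applied to $T$.
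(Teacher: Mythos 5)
Your proposal is correct and follows essentially the same route as the paper: both reduce $\KLD{Y}{\mcal{N}(0,1)}$ to $\KLD{p(x)}{f_N(x)}$ via the bijection $\Phi^{-1}\circ F_N$ and then invoke \lemref{lem:universal_kl}. The paper simply cites KL invariance under bijective transformations together with the inverse-CDF trick rather than writing out the Jacobian cancellation, which is exactly the alternative you note in your last paragraph.
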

\begin{proof}
    Let $Z\sim \mcal{N}(0,1)$. Since KL divergence is invariant to bijective transformations, we get
    \begin{align*}
        \KLD{Y}{Z} = \KLD{F_N(X)}{\Phi(Z)} = \KLD{X}{F_N^{-1}(\Phi(Z))} \stackrel{(i)}{=} \KLD{p(x)}{f_N(x)},
    \end{align*}
    where $f_N(x) = F_N'(x)$. Here $(i)$ is because $\Phi(Z)$ is a uniform random variable and the inverse CDF trick of producing samples from a distribution with CDF $F_N(x)$ and PDF $f_N(x)$.
\end{proof}

\subsection{Gaussianization Flows are Universal Approximators}

\begin{customthm}{\ref{thm:1}}
Let $p$ be any continuous distribution supported on a compact set $\mcal{X} \subset \mbb{R}^D$, and $\inf_{x\in\mcal{X}} p(x) \geq \delta$ for some constant $\delta > 0$. Let $\Psi: \mbb{R}^D \to \mbb{R}^D$ denote a marginal Gaussianization layer of the Gaussianization flow, where for each element $x_i$ of $\bfx$, we choose an integer $N_i$, real constants $\mu_{ij} \in \mbb{R}, \sigma_{ij} \in \mbb{R}^{++}$ for $j=1,\cdots,N_i$ such that
\begin{align*}
    \Psi_i(\bfx) = \Phi^{-1}\left( \frac{1}{N_i} \sum_{j=1}^{N_i} \sigma\left( \frac{x_i - \mu_{ij}}{\sigma_{ij}} \right) \right).
\end{align*}
Then, there exists a sequence of marginal Gaussianization layers $\{\Psi^{(1)}, \Psi^{(2)},\cdots, \Psi^{(k)}, \cdots\}$ and rotation matrices $\{R^{(1)}, R^{(2)}, \cdots, R^{(k)}, \cdots\}$ such that the transformed random variable
\begin{align*}
    \bfX^{(k)} \triangleq \Psi^{(k)}(R^{(k)} \Psi^{(k-1)}(R^{(k-1)} \cdots \Psi^{(1)}(R^{(1)}\bfX))) \stackrel{d}{\to} \mcal{N}(\mbf{0}, \mbf{I}),
\end{align*}
where $\bfX \sim p$.
\end{customthm}
\begin{proof}
The proof closely parallels Theorem 10 in \cite{chen2001gaussianization} and Huber's proof of weak convergence for projection pursuit density estimates~\citep{huber1985projection}. Let $q^{(i)}$ denote the distribution of $\bfX^{(i)}$, and let $q^{(i,j)}$ denote the $j$-th marginal distribution of $\bfX^{i}$. Because $\Psi^{(i)}$ and $R^{(i)}$ are continuous operations for all $i$, and $p$ is supported on a compact set $\mcal{X}$, we know $q^{(i,j)}$ is also supported on a compact set, which we denote as $\mcal{X}^{(i,j)}$. In addition, we observe that the determinant of Jacobian of $\Psi^{(i)}$ is always strictly positive, and the determinant of $R^{(i)}$ is $\pm 1$. Combining this observation with the condition that $\inf_{\bfx\in\mcal{X}} p(\bfx) > \delta$, we know there exists a constant $\delta^{(i,j)} > 0$ such that $\inf_{x\in\mcal{X}^{(i,j)}} q^{(i,j)}(x) > \delta^{(i,j)}$ for all $i,j$. Therefore, $q^{(i,j)}$ and $\Psi^{(i)}$ satisfy the conditions in \lemref{lem:universal_kl} and \corref{cor:kl}, which entails that for any $\epsilon^{(i,j)} > 0$ and $\epsilon^{(i)} > 0$ there exists $\Psi^{(i)}$ such that $\forall j: \KLD{q^{(i,j)}(x)}{\mcal{N}(0,1)} < \epsilon^{(i,j)}$ and $J_m(\bfX^{(k)}) < \epsilon^{(i)}$.

The KL divergence of $\bfX^{(k)}$ is
\begin{align*}
    J(\bfX^{(k)}) &= I(\bfX^{(k)}) + J_m(\bfX^{(k)}) \\
    &= I(R^{(k)}\bfX^{(k-1)}) + J_m(\Psi^{(k)}R^{(k)}\bfX^{(k-1)}).
\end{align*}
After $q^{(k-1)}(\bfX^{(k-1)})$ is obtained, we always choose $R^{(k)} = \arginf_{R}I(R\bfX^{(k-1)})$, and then select $\Psi^{(k)}$ appropriately such that 
\begin{align*}
    J_m(\Psi^{(k)}R^{(k)}\bfX^{(k-1)}) \leq \epsilon^{(k)} = \frac{\epsilon^{(k-1)}}{2}
\end{align*}
Let $\Delta^{(k)}$ denote the reduction in the KL divergence in the $k$-th layer:
\begin{align*}
    \Delta^{(k)} &= J(\bfX^{(k)}) - J(\bfX^{(k+1)}) \\
    &= I(\bfX^{(k)}) + \epsilon^{(k)} - \inf_{R}I(R\bfX^{(k)}) - \epsilon^{(k+1)}\\
    &\geq 0.
\end{align*}
Since $\{J(\bfX^{(k)})\}$ is a monotonically decreasing sequence and bounded from below by $0$, we have
\begin{align*}
    \lim_{k \to \infty} \Delta^{(k)} = 0.
\end{align*}
Also since $\epsilon^{(k)} = \epsilon^{(k-1)} / 2$, we have $\lim_{k\to\infty} \epsilon^{(k)} = 0$.

Next, we consider the following quantity
\begin{align*}
    J^*(\bfX) = \max_{\norm{\bfalpha}_2 = 1}J(\bfalpha^\intercal \bfX).
\end{align*}
For any unit vector $\bfalpha$, we let $U_\bfalpha$ be an orthogonal completion of $\bfalpha$, \ie,
\begin{align*}
    U_\bfalpha = [\bfalpha, \bfalpha_2, \cdots, \bfalpha_n].
\end{align*}
Then, for any $\bfalpha$ with $\norm{\bfalpha}_2=1$, we have
\begin{align*}
    J(\bfalpha^\intercal \bfX^{(k)}) &\leq J_m(U_\bfalpha \bfX^{(k)})\\
    &= J(U_\bfalpha \bfX^{(k)}) - I(U_\bfalpha \bfX^{(k)})\\
    &\leq J(\bfX^{(k)}) - \inf_R I(R \bfX^{(k)})\\
    &= I(\bfX^{(k)}) + \epsilon^{(k)} - \inf_R I(R \bfX^{(k)})\\
    &= \Delta^{(k)} + \epsilon^{(k+1)}.
\end{align*}
Therefore,
\begin{align*}
J^*(\bfX^{(k)}) = \sup_{\norm{\bfalpha}_2=1} J(\bfalpha^\intercal \bfX^{(k)}) \leq \Delta^{(k)} + \epsilon^{(k+1)}.
\end{align*}
Since $\Delta^{(k)} \to 0$ and $\epsilon^{(k+1)} \to 0$, we know $J^*(\bfX^{(k)}) \to 0$. Applying Lemma 6 from \citet{chen2001gaussianization}, we get
\begin{align*}
    \bfX^{(k)} \stackrel{d}{\to} \mcal{N}(\mbf{0}, \mbf{I}),
\end{align*}
which proved our statement.
\end{proof}

\section{MORE DETAILS ON PATCH-BASED ROTATION MATRICES}\label{app:patch}
\begin{figure}[!ht]
    \centering
        \includegraphics[width=0.5\textwidth]{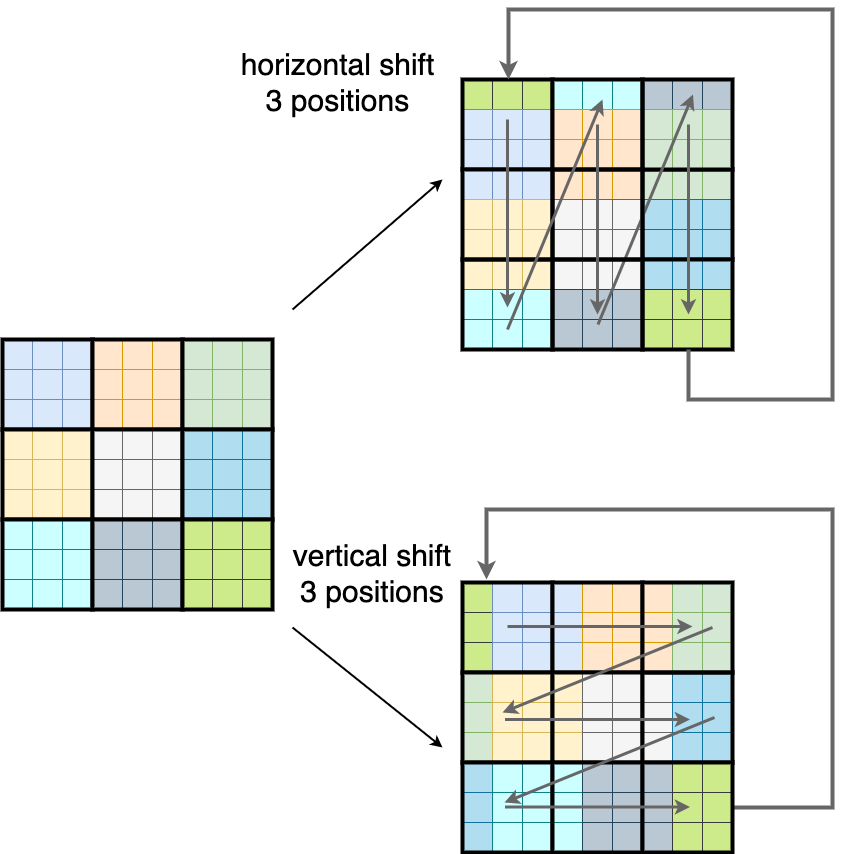}
    \caption{The shifting operation for constructing patch-based rotation matrices.}
    \label{fig:patch_rotation_appendix}
\end{figure}

The motivation for patch-based rotation is based on the following:
\begin{itemize}
    \item [1.] Naive Householder reflections are too expensive for high dimensional data.
    \item [2.] For image datasets, a pixel should be more correlated to its neighboring pixels than pixels that are far away.
\end{itemize}

Instead of parameterizing a rotation for the entire image, a patch-based rotation only focuses on the local image structure: it first partitions an image into non-overlapping groups of neighboring pixels, and then parametrizes a (smaller) rotation matrix for each group independently. To introduce dependency across different groups, we also shift the inputs at each layer. More specifically, we shift each pixel by $c$ positions horizontally (or vertically) in a circular way by wrapping around pixels that are discarded after the shift (see \figref{fig:patch_rotation_appendix}). 

\section{SAMPLES}
We provide uncurated samples from GF, MADE and RBIG for both MNIST and Fashion-MNIST datasets (see \figref{fig:mnist_samples} and \figref{fig:fmnist_samples}).%

\begin{figure}%
    \centering
    \begin{subfigure}[b]{0.3\textwidth}
        \includegraphics[width=\textwidth]{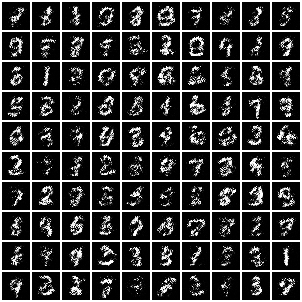}
    \end{subfigure}
    \begin{subfigure}[b]{0.3\textwidth}
        \includegraphics[width=\textwidth]{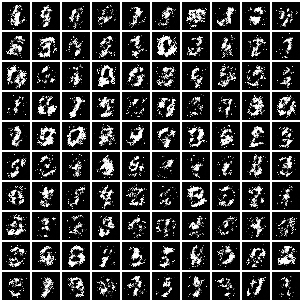}
    \end{subfigure}
    \begin{subfigure}[b]{0.3\textwidth}
        \includegraphics[width=\textwidth]{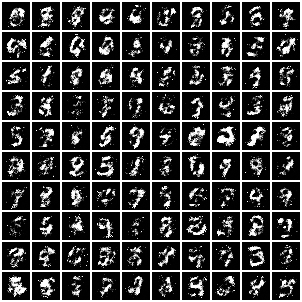}
    \end{subfigure}
    \caption{Uncurated MNIST samples from various models. \textbf{Left:} MADE. \textbf{Middle:} RBIG. \textbf{Right:} GF.}
    \label{fig:mnist_samples}
\end{figure}

\begin{figure}%
    \centering
    \begin{subfigure}[b]{0.3\textwidth}
        \includegraphics[width=\textwidth]{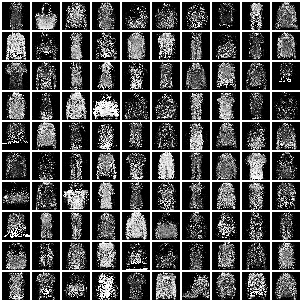}
    \end{subfigure}
    \begin{subfigure}[b]{0.3\textwidth}
        \includegraphics[width=\textwidth]{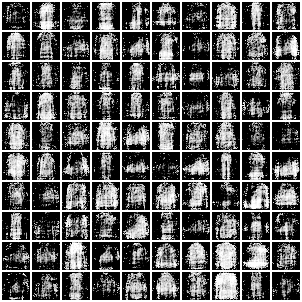}
    \end{subfigure}
    \begin{subfigure}[b]{0.3\textwidth}
        \includegraphics[width=\textwidth]{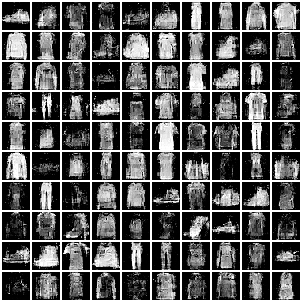}
    \end{subfigure}
    \caption{Uncurated Fashion-MNIST samples from various models. \textbf{Left:} MADE. \textbf{Middle:} RBIG. \textbf{Right:} GF.}
    \label{fig:fmnist_samples}
\end{figure}

\section{ADDITIONAL EXPERIMENTAL DETAILS FOR GF}
\label{sec:settings}

\begin{table*}
\caption{Model architectures and hyperparameters for Gaussianization Flows.}%
\label{tab:gf_architecture}
\begin{center}
    \begin{tabular}{p{3.0cm} |c c c c c c c}
        \toprule
        Method &POWER &GAS &HEPMASS &MINIBOONE &BSDS300 &MNIST &FMNIST\\
        \midrule
        layers &50 &150 &100  &90 &30 &10 &10\\
        anchor points &50 &50 &100 &50 &60 &50 &50\\
        Householder refl. &6 &8 &21 &43 &63 &50 &50\\
        patch size &- &- &- &- &- &4 &4\\
        learning rate &0.005 &0.005 &0.005 &0.005 &0.005 &0.01 &0.01\\
        epoch &200 &200 &200 & 200 &200 &200 &200 \\
        batch size &2000 &2000 &500 &500 &1000 &100 &100\\
        \bottomrule
    \end{tabular} 
\end{center}
\end{table*}

\subsection{Tabular and Image Datasets}
\label{sec:density_estimation}
We report the architectures and hyperparameters of our GF models in \tabref{tab:gf_architecture}. We tested the following three different parameterizations of the trainable rotation matrix layer.
\begin{itemize}
    \item [(1)] Na\"{i}ve Householder reflections.\\
    \item [(2)] Patch-based rotation matrices.\\
    \item [(3)] Alternating between (1) and (2).
\end{itemize}
We only use (1) for tabular datasets, where each rotation matrix is constructed using $D$ Householder reflections ($D$ denotes the dimensionality of datasets). We try all three parameterizations for MNIST and Fashion-MNIST. In the case of (1), we use 50 Householder reflections to trade off between computational efficiency and model expressivity. In the case of (2), we set patch size to 4 and randomly choose a different shifting constant $c$ for each layer. We observe that (2) achieves the best results on MNIST while (3) performs the best on Fashion-MNIST. We report the best results out of all parameterizations for MNIST and Fashion-MNIST in \tabref{tab:tabular_table}.

\subsection{Stretched Tabular Datasets}
We report the architectures and hyperparameters of our GF models in \tabref{tab:gf_architecture}. %
For other models, we adopt the default architectures and hyperparameters as mentioned in their original papers, except that we train each model for 100 epochs. 

\subsection{Small Training Sets}
We report the architectures and hyperparameters of our GF models in \tabref{tab:gf_architecture}. %
For other models, we adopt the default architectures and hyperparameters as mentioned in their original papers, except that we train each model for 200 epochs on a smaller shuffled subset.

\section{ADDITIONAL EXPERIMENTAL DETAILS FOR RBIG}
\label{sec:rbig_analysis}
For the marginal Gaussianization step, we fit the 1D distributions with KDE (logistic kernel) and tune the kernel bandwidth with the rule-of-thumb bandwidth estimator. For the rotation step, we use rotation matrices obtained from PCA rather than random, as we empirically observe that PCA exhibits better performance and faster convergence than random rotation matrices. 

To obtain better results, we perform KDE on 50000 data points for all the tabular datasets, except for BSDS300 where the number of samples is reduced to 20000 for faster computation. We use 10000 data points for both MNIST and Fashion-MNIST. The number of RBIG iterations is set to 5 for all datasets, as we found it performs the best within the range 1--100.

\end{document}